\newtheorem{lemma}{Lemma}
\title{Output Randomization: A Novel Defense for both White-box and Black-box Adversarial Models}
\author{
Daniel Park$^1$
\and
Haidar Khan$^2$\and
Azer Khan$^{3}$\and
Alex Gittens$^1$\And
B\"ulent Yener$^1$
\affiliations
$^1$Rensselaer Polytechnic Institute\\
$^2$Amazon Alexa\\
$^3$SUNY New Paltz\\
}
\begin{document}
\maketitle

\begin{abstract}
Adversarial examples pose a threat to deep neural network models in a variety of scenarios, from settings where the adversary has complete knowledge of the model in a "white box" setting and to the opposite in a "black box" setting. 
In this paper, we explore the use of output randomization as a defense against attacks in both the black box and white box models and propose two defenses. 
In the first defense, we propose output randomization at test time to thwart finite difference attacks in black box settings.
Since this type of attack relies on repeated queries to the model to estimate gradients, we investigate the use of randomization to thwart such adversaries from successfully creating adversarial examples.
We empirically show that this defense can limit the success rate of a black box adversary using the Zeroth Order Optimization attack to 0\%.
Secondly, we propose output randomization training as a defense against white box adversaries. 
Unlike prior approaches that use randomization, our defense does not require its use at test time, eliminating the Backward Pass Differentiable Approximation attack, which was shown to be effective against other randomization defenses.
Additionally, this defense has low overhead and is easily implemented, allowing it to be used together with other defenses across various model architectures.
We evaluate output randomization training against the Projected Gradient Descent attacker and show that the defense can reduce the PGD attack's success rate down to 12\%  when using cross-entropy loss.
\end{abstract}

\section{Introduction}
The success of deep neural networks has led to scrutiny of the security vulnerabilities in deep neural network based models. One particular area of concern is weakness to adversarial input: carefully crafted inputs that resist detection and can cause arbitrary errors in the model~\cite{Szegedy2013}. This is especially highlighted in the domain of image classification, where an attacker creates an image that resembles a natural image to a human observer but easily fools deep neural network based image classifiers~\cite{Kurakin2016}. 

Various attacks exists throughout the lifecycle of a deep neural network model. For example, adversaries can attack a model during training by injecting corrupting data into the training set or by modifying data in the training set. However, inference time attacks are more worrisome as they represent the bulk of realistic attack surfaces~\cite{Grosse2016,Narodytska2016,Carlini2018}. 

The input created under an inference time attack is known as an adversarial example and methods for generating such examples have recently attracted much attention. In most cases, the adversarial example is created by perturbing a (correctly classified) input such that the model commits an error. The perturbation applied to the "clean" input is typically constrained to be undetectably small~\cite{Papernot2016,Katholm1991,Carlini2017b}.

A defense against adversarial attacks is defined by the threat model it is designed to defend against~\cite{Carlini2019}. The most permissive threat model makes the weakest assumptions about the adversary. One such threat model can be assuming the adversary has complete knowledge of the model, including architecture and parameters of the underlying network. This is known as the "white box" setting. More restrictive threat models allow only so called “black box” attacks, attacks that can create adversarial examples without having access to the model architecture or weights and only accessing some form of output of the model.

In this paper, we mathematically and empirically explore the effect of output randomization on the robustness of a model against adversarial examples. Unlike existing randomization approaches that incorporate randomization by adding noise to the input, as in \cite{smilkov2017}, or to the output of convolutional layers, as in \cite{Liu2017}, we propose directly perturbing the output of a model. We examine the use of output randomization in two ways: 1) output randomization against finite difference based black box attacks and 2) output randomization training against white box attacks.

The rest of the paper is organized as follows. In Section \ref{sec:related} we cover related approaches. In Section~\ref{sec:threat}, we discuss the threat models considered. In Section~\ref{sec:attacks}, we describe the attacks utilized by the adversaries in our threat models. Output randomization as a black box and white box defense is described in Sections~\ref{sec:defense} and \ref{whitebox}, respectively. We show empirical results in Section~\ref{sec:results} and conclude in Section~\ref{sec:conclusion}.

\section{Related Work}
\label{sec:related}
\cite{Liu2017} proposes a random self-ensembling defense that injects noise into different layers of a target model with to protect against gradient-based attacks. To preserve performance, a prediction is ensembled over multiple queries. \cite{Lecuyer2019} take a similar approach in adding noise layers to their architecture, however also present guarantees on the defense's robustness against adversarial attack. 

Another popular method is to inject noise into the input pixels. \cite{smilkov2017} proposed adding noise to an image and averaging over multiple queries to achieve a sharper sensitivity map, which could be used to identify pixels that have a stronger influence over the predicted label. Many works have explored the idea of noisy inputs as a defense against adversarial examples. \cite{zantedeschi2017} proposed augmenting datasets with Gaussian noise. \cite{franceschi18} and \cite{fawzi2016} theoretically analyze input perturbation's effect on the robustness of target model. Other works have explored certified adversarial robustness using additive random noise \cite{Lecuyer2019,Li2018}

Although other defense methods consider introducing randomness to the input or model itself, this work is the first to our knowledge to consider randomizing the output of the model directly. Additionally, our proposed method does not inject randomization during the test phase to maintain high classification accuracy. This eliminates a class of attack, namely the Backward Pass Differential Approximation (BPDA) attacks introduced in \cite{Athalye2018}.

\section{Threat Model}\label{sec:threat}
For readability in this section, we will label our white box and black box models as $A_w$ and $A_b$, respectively.

The goal of both adversararies is to force the classifier to commit an error within a distortion limit, such that the example crafted by the adversary is similar to the original example. For $A_b$, we follow the threat model set by \cite{Chen2017} and use the $l_2$ perturbation penalty. Attackers under the $A_b$ model are allowed to query the model up to a maximum limit. For $A_w$, we follow the work of \cite{Madry2017} and use the $l_\infty$ norm. In this model, attackers use the PGD attack with either cross-entropy loss or the Carlini-Wagner loss, which optimizes the difference between the correct label's and highest incorrect label's assigned probabilities \cite{Carlini2017}. For the remainder of this paper, we will label the PGD attack with cross-entropy and Carlini-Wagner loss as PGD(Xent) and PGD(CW).

$A_b$ has access to the model only at the input and output level. $A_b$ is aware of the details of the defenses protecting the model and the type of randomness associated with any defense but not the exact random numbers generated. $A_w$ differs from $A_b$ only by having access to the entire model, including the model's architecture and weights.

\section{Preliminaries}
\label{sec:attacks}
In this section, we briefly go over finite difference estimation attacks and the projected gradient descent (PGD) attack. 

\subsection{Finite Difference Estimation Attacks}
Finite difference estimation attacks are called “gradient-free” since they do not involve computing gradients of the input by backpropagation on the target model. Instead, the gradients of the input are estimated by using the finite difference estimate for each input feature. 

Finite difference (FD) based approaches involve evaluating the adversarial loss at two points, $x+he_i$ and $x-he_i$, close to $x$ (with small $h$ and unit vector $e_i$) and using the slope to estimate the gradient of the loss with respect to pixel $i$ of the input. For a loss function $L$, the finite difference estimate of the gradient of pixel $i$ is given by:

\begin{equation}\label{eq:finitediff}
g_i = \frac{L(f(x+he_i)) - L(f(x-he_i))}{2h}
\end{equation}

Details specific to the ZOO and QL-attack can be found in the supplementary material \cite{Chen2017,Ilyas2018}.

\subsection{Projected Gradient Descent (PGD) attack}
A popular white box attack is the Fast Gradient Sign Method (FGSM), proposed in \cite{Goodfellow2014a}. Simply put, an adversary generates an adversarial example $x'$  from a natural image $x$ such that 
$$ x' = x + \epsilon \mbox{sign}(\nabla_x\mathcal{L}(\theta, x, y)) $$
where $\mathcal{L}$ is some loss function.

A more powerful variant of this gradient-based attack is to use Projected Gradient Descent, which iteratively updates $x$ along the direction of the gradient and projects it onto the natural input space \cite{Madry2017}. 

In addition to being used to evaluate defenses against $l_\infty$ attacks, \cite{Madry2017} showed the effectiveness of adversarial training with inputs generated using PGD.

\section{Thwarting Finite Difference Attacks}
\label{sec:defense}
The intuition behind output randomization is that a model may deliberately make errors in its predictions in order to thwart a potential attack. This idea introduces a tradeoff between accurate predictions and robustness against finite difference based black box attacks. 

Output randomization for a model that produces a probability distribution over class labels replaces the output of the model $p$ by a stochastic function $d(p)$ that must satisfy two conditions: 
\begin{enumerate}
    \item The probability of misclassifying an input due to applying $d$ is bounded by $K$
    \item The vector $d(p)$ prevents adversaries under the given threat model from generating adversarial examples.
\end{enumerate}
The first condition ensures that the applied defense minimally impacts honest users of the model. The defense's effectiveness comes from satisfying the second condition as the introduced randomness must prevent an adversary from producing an adversarial example.

In the following two sections, we consider a simple noise-inducing function $d(p) = p + \epsilon$ where $\epsilon$ is a random variable.

\subsection{Missclassification Rate}
A simple function useful for defending a model is the Gaussian noise function $d(p) = p + \epsilon$ where $\epsilon$ is a Gaussian random variable with mean $\mu$ and variance $\sigma^2$ ($\epsilon \sim \mathcal{N}(\mu,\sigma^2\cdot\textbf{I}_C)$). In the black box setting, a user querying the model with an input $x$ receives the perturbed vector $d(p)$ instead of the true probability vector $p$. Note that $d(p)$ does not necessarily represent a probability mass function like $p$. 

To verify that this function satisfies the first condition above, we wish to know the probability that the class predicted by the undefended model is the same as the class predicted by the defended model. If the output of the model for an input $x$ is $p$, we will refer to the maximum element of $p$ as $p_m$ and the rest of the elements of $p$ in decreasing order as $p_2, p_3, ... p_C$. 

Suppose the model correctly classifies the input $x$ in the vector $p$, we can express the probability that $x$ is misclassified in the vector $d(p)$ as less than or equal to the following sum:
\[ \sum_{i=2}^C{\mathbb{P}(d(p_i) > d(p_m)) } \]
We can write $ \mathbb{P}(d(p_i) > d(p_m)) $ for $i=2,3,...C$ as:
\[ \mathbb{P}(d(p_i) > d(p_m)) = \mathbb{P}(p_i + \epsilon_i > p_m + \epsilon_m) \]
If we define $\delta_i = p_m - p_i$, as shown in Figure~\ref{fig:deltas}, and since $e_i := \epsilon_i - \epsilon_m$ is itself a Gaussian with mean $\mu_i-\mu_m$ and variance $\sigma_i^2 + \sigma_m^2$ then we can write:
\begin{equation*}
\begin{aligned}
\mathbb{P}(e_i > \delta_i) & = 1 - \mathbb{P}(e_i \leq \delta_i) \\
                           & = 1 - \mathbb{P}\left(\frac{e_i - \mu_i + \mu_m}{\sigma_i^2 + \sigma_m^2} \leq \frac{\delta_i - \mu_i + \mu_m}{\sigma_i^2 + \sigma_m^2}\right)
\end{aligned}
\end{equation*}

Using the cumulative distribution function of a standard Gaussian distribution $\Phi$, we can write the misclassification probability 

\begin{equation*}
\begin{aligned}
K := \mathbb{P}(d(p_i) > d(p_m)) & = 1 - \Phi\left(\frac{\delta_i - \mu_i + \mu_m}{\sigma_i^2 + \sigma_m^2}\right) \\
                                 & = \Phi\left(-\frac{\delta_i - \mu_i + \mu_m}{\sigma_i^2 + \sigma_m^2}\right) 
\end{aligned}
\end{equation*}

For the special case of a Gaussian noise function $d(p)$ with mean 0 and variance $\sigma^2$ we would like to fix the probability of misclassification to a value $K$ and compute the appropriate variance $\sigma^2$. We can use the inverse of the standard Gaussian cdf $\Phi^{-1}$, or the probit function, to write this easily:

\[ \sigma^2 = -\frac{\delta_i}{2\Phi^{-1}(K)} \]

Note that the desired misclassification rate $K<0.5$ in any real case, so the rhs will be positive. If we consider $\delta_i$ as the confidence of the model, then the allowable variance will be larger when the model is confident and smaller otherwise. We show the calculations above for one class $i$. Note that the misclassification probability ($K$) and level of noise ($\sigma^2$) can be set for each class separately. In Figure~\ref{fig:vars} we show the maximum allowable variance for different misclassification rates.

\begin{figure}
    \centering
    \begin{subfigure}[t]{0.48\linewidth}
        \centering
        \includegraphics[width=.85\linewidth]{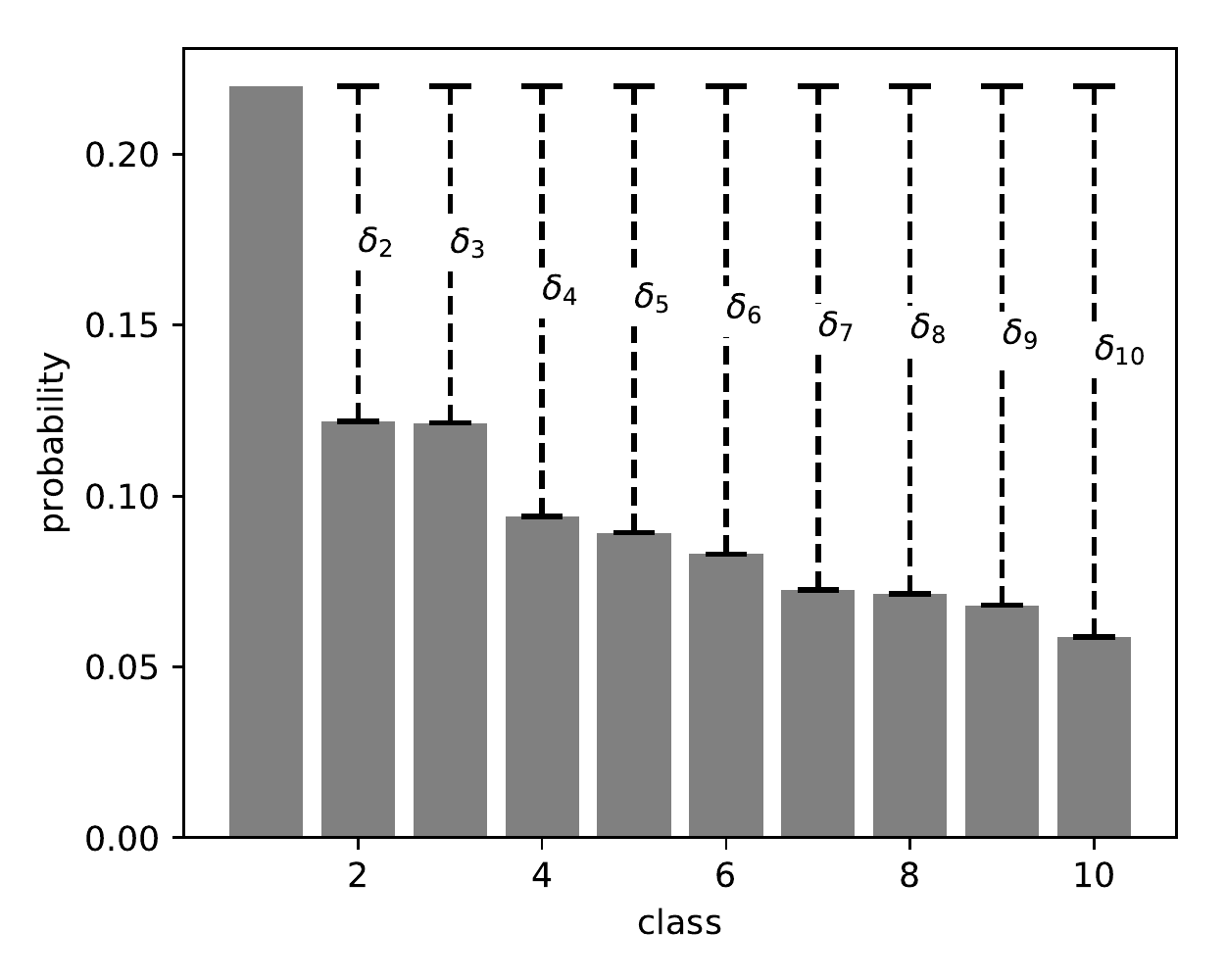}
        \caption{Probability distribution over classes generated by a classification model. $\delta_i$ represents the relative \mbox{confidence} of the model's prediction.}
        \label{fig:deltas}
    \end{subfigure}
    \hspace{.5mm}
    \begin{subfigure}[t]{0.48\linewidth}
        \centering
        \includegraphics[width=.85\linewidth]{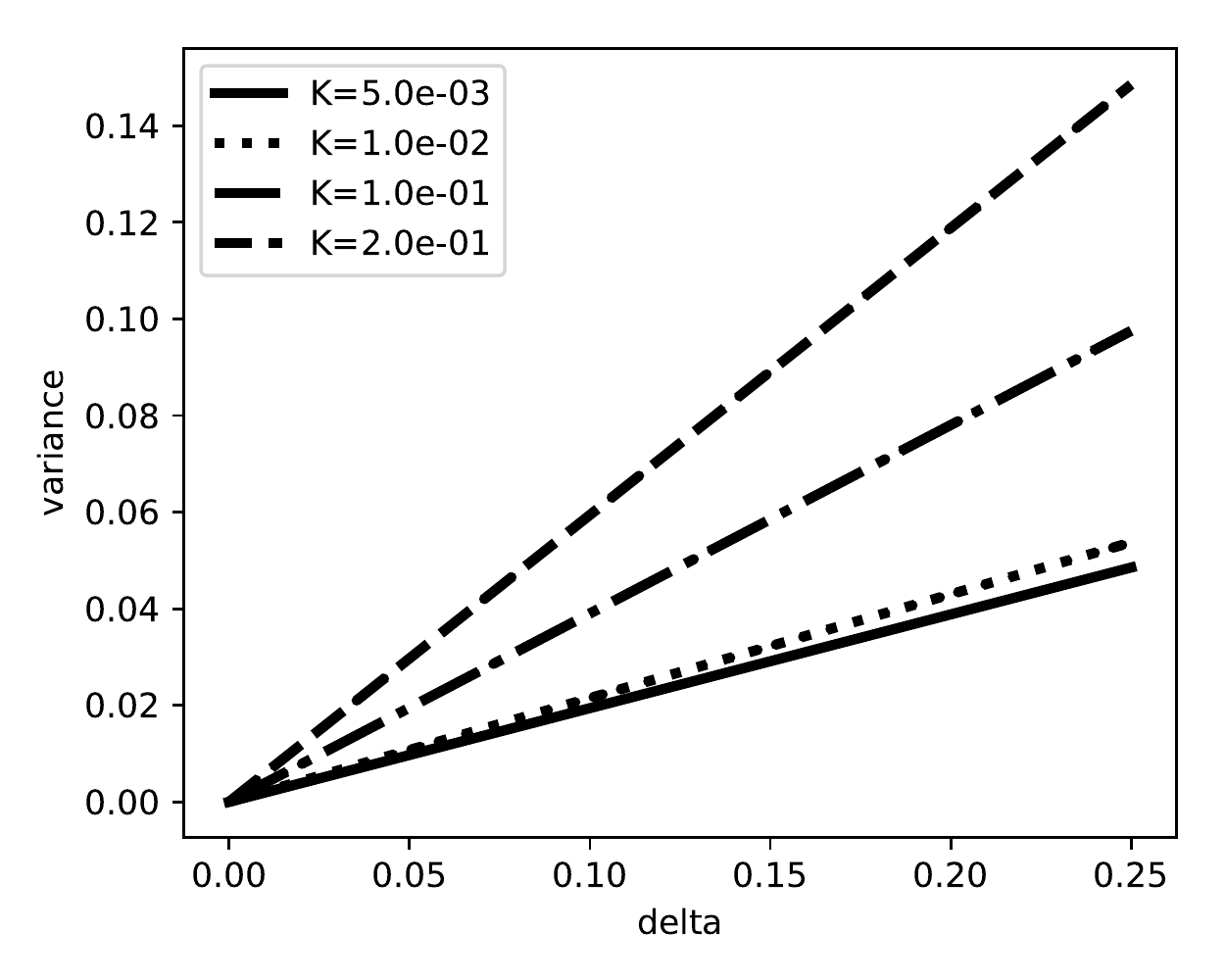}
        \caption{Maximum output randomization $\sigma^2$ vs $\delta_i$ for misclassification rates \mbox{$K= \{20\%, 10\%, 1\%, 0.5\%\}$}}
        \label{fig:vars}
    \end{subfigure}
    \caption{Controlling misclassification caused by output randomization}
\end{figure}

\subsection{Finite Difference Gradient Error}

To verify the function $d(p)$ satisfies the second condition we show the effect of the output randomization on the gradient accessible to the adversary.

Here, we write the adversarial loss function from Equation \ref{eq:finitediff} in terms of the output of the model to make explicit the dependence of $L$ on the output vector of the network $p = f(\cdot)$. $p$ and $p'$ are used to distinguish between the two output vectors needed to compute the gradient estimate. When the network is defended using output randomization, the function $d()$ is applied to the output vector of the network. Thus, the finite difference gradient computed by the adversary is:

\[ \gamma_i = \frac{L(d(p)) - L(d(p'))}{2h} \]

The error in the FD gradient introduced by the defense is given by $| g_i - \gamma_i|$. 
When $d$ is a function that adds noise $\epsilon$ to the output of the network, the expected value of the error is:
\[ | E[ g_i - \gamma_i]|  = \left| g_i - E\left[\frac{L(p+\epsilon) - L(p'+\epsilon')}{2h}\right]  \right| \]
Since the untargeted attack is generally considered easier than the targeted attack, we consider how the gradient error of the defended model behaves under the untargeted adversarial loss function. For untargeted attacks, we simplify the loss function to: $ L_u(p) = \log(p_c) - \log(p_o) = \log(\frac{p_c}{p_o})$ 
where $p_c$ is the probability of the true class and $p_o$ is the maximum probability assigned to a class other than the true class. 

Substituting the untargeted adversarial loss for $L(\cdot)$ we see:

\[ |E[ g_i - \gamma_i ]|  = \left| g_i - \frac{1}{2h} E\left[\log(\frac{p_c+\epsilon_c}{p_o+\epsilon_o}) - \log(\frac{p_c'+\epsilon_c'}{p_o'+\epsilon_o'})\right]  \right| \]

We use a second order Taylor series approximation of $E[\log(X)] \approx \log(E[X]) - \frac{Var[X]}{2E[X]^2}$ to approximate the expectations. If we further assume $\epsilon$ is zero-mean with variance $\sigma^2$, then $E[p+\epsilon] = p$ and the expectation of the defended gradient is approximately:

\[ |E[ g_i - \gamma_i ]|  \approx \left| \frac{\sigma^2}{4h}\left( \frac{\sigma^2 + p_o^2 + p_c^{2'}}{p_c^{2'} p_o^2} - \frac{\sigma^2 + p_o^{2'} + p_c^2}{p_c^2 p_o^{2'}} \right) \right| \]

This approximation summarizes the surprising effect output randomization has on finite difference based black box attacks. First, it is easy to see that the error scales with the variance of $\epsilon$ (in the zero mean case). Even when the adversary adapts to the defense by averaging over multiple queries the variance is only reduced linearly by the number of samples. Second, even in expectation the error is never non-zero. This is because one of two cases must be true for the error to be zero:
\begin{enumerate}
    \item $p_c == p_o$ and $p'_c == p'_o$
    \item $p_c == p'_c$ and $p_o == p'_o$
\end{enumerate}
Case 1 cannot occur because it implies the model is predicting two different classes simultaneously. Case 2 will only occur if $L(p) == L(p')$ which means $g_i = 0$. We show that this behavior holds on image classification datasets in Section \ref{empirical-results}.

\section{Training with Output Randomization}
\label{whitebox}
Learning using output randomization solves the noisy regularized empirical risk minimization (ERM) problem

\begin{equation}
\label{problem}
\min_\theta \mathbb{E}_{(x,y)\backsim \mathbb{P}} \mathbb{E}_{\epsilon} \mathcal{L}(f_{\theta}(x) + \epsilon, y),
\end{equation}

where $\mathcal{L}(\cdot, y)$ is a loss function that measures the discrepancy between the vector of predicted class probabilities (or logits) and $y$, the true one-hot encoding of the class of $x$. Here $\epsilon\sim \mathcal{N}(0, \Sigma)$ is independent Gaussian noise.

Intuitively, the higher the variance of $\epsilon$, the more robustly the model's prediction $f_\theta(x)$ must approximate $y$ to overcome the noise. In fact, we interpret output randomization as solving a regularized ERM that ensures the predicted class probability (or logit) vector is concentrated on the true class.  To do so, we first take the Taylor series expansion of $\mathcal{L}(\cdot, y)$ in Equation~\ref{problem}, centered around $f_\theta(x)$:

\begin{equation}
\label{eq-ts}
\begin{aligned}
\mathcal{L}(f_\theta(x) + \epsilon, y) =\ & \mathcal{L}(f_\theta(x), y) + \epsilon^\top\nabla_1\mathcal{L}(f_\theta(x), y) \\
                                  & + \epsilon^\top\nabla_1^2\mathcal{L}(f_\theta(x), y)\epsilon \\
                                  & + \mathcal{O}(||\epsilon||^3),
\end{aligned}
\end{equation}
where $\nabla_1$ ($\nabla_1^2$) denotes the gradient (Hessian) with respect to the first argument of $\mathcal{L}(\cdot, y)$.

Since $\epsilon \backsim \mathcal{N}(0, \Sigma)$, the expected value of the second and third terms of the r.h.s of Equation \ref{eq-ts} are as follows:

\begin{equation}
\label{eq1}
    \mathbb{E}[\epsilon^\top\nabla_1 \mathcal{L}(f_\theta(x), y)] = 0
\end{equation}

\begin{equation}
\label{eq2}
\begin{aligned}
    \mathbb{E}[\epsilon^\top\nabla_1^2 \mathcal{L}(f_\theta(x), y)\epsilon] & := \mathbb{E}[\epsilon^\top H\epsilon] \\
        & = \mathbb{E} \Big [\sum_{ij}\epsilon_i\epsilon_jH_{ij} \Big ] \\
        & = \sum_{ij} \Sigma_{ij} H_{ij} \\
        & = \mbox{Tr}(\Sigma \odot H),
\end{aligned}
\end{equation}
where $H = \nabla_1^2 \mathcal{L}(f_{\theta}(x), y)$ is the Hessian and $\odot$ denotes the Hadamard product. Equation \ref{eq-ts} can then be written as

$$ \mathbb{E}[\mathcal{L}(f_\theta(x), y)] + \mathbb{E}[\mbox{Tr}(\Sigma \odot H)] + \mathcal{O}(||\epsilon||^3) $$

Assuming that the higher-order terms of the Taylor series expansion can be ignored, solving the minimization problem from Equation \ref{problem} essentially solves

\begin{equation}
\label{rerm}
\begin{aligned}
    \theta = & \ \mbox{argmin}_\theta[\mathbb{E}_{(x,y)\backsim \mathbb{P}} \ \mathcal{L}(f_\theta(x), y) \\
             & + \mathbb{E}_{(x, y) \backsim \mathbb{P}} \mbox{Tr}(\Sigma \odot \nabla_1^2 \ \mathcal{L}(f_\theta(x), y)) ].
\end{aligned}
\end{equation}
This is a regularized ERM problem, where the regularization term is the expected trace of the Hadamard product of the noise covariance matrix and the Hessian of the loss. If $\Sigma = \sigma^2 \mbox{I}$ as in our experiments, then the regularizer simplifies to the expected sum of the eigenvalues of the Hessian:
\begin{equation*}
    \sigma^2 \cdot \mathbb{E}_{(x,y) \backsim \mathbb{P}}\ \sum_{i=1}^K \lambda_i
    \big(\nabla_1^2\mathcal{L}(f_\theta(x), y)\big).
\end{equation*}
Here $\lambda_i(\cdot)$ denotes the $i$th-largest eigenvalue of its argument. Thus we see that output randomization with standard Gaussian noise penalizes the expected curvature of the loss $\mathcal{L}(\cdot, y)$ around the model's predictions $f_\theta(x)$, and the variance of the noise serves as the regularization parameter. The reduced curvature implies that slight changes to the vector of predicted class probability (or logits) $f_\theta(x)$ will not significantly increase the loss. As a consequence, the margin between the true class and the other classes must be significant-- otherwise, the loss would significantly change by perturbing $f_\theta(x)$ by the small amount corresponding to the small margin.

For a more quantitative understanding of this phenomenon, take $f_{\theta}(x)$ to be the logits vector, as we do in our experiments, and $\mathcal{L}(v, y)$ to be cross-entropy loss between the one-hot vector $y$ and the categorical distribution with logits vector $v$. Then a single step of stochastic gradient descent with the output randomization objective $\mathcal{L}(f_\theta(x) + \epsilon, y)$ attempts to update the parameters $\theta$ in order to ensure that the coordinate in the perturbed logit vector $f_{\theta}(x) + \epsilon$ corresponding to the true class $i_{\text{true}}$ of $x$ is significantly larger than the other entries:
\begin{align*}
f_{\theta}(x)_{i_{\text{true}}} & \gg \left( \max\limits_{i \neq i_{\text{true}}} f_{\theta}(x)_i + \epsilon_i \right) - \epsilon_{i_{\text{true}}} \\
 & > f_{\theta}(x)_{i^\star} + (\epsilon_{i^\star} - \epsilon_{i_{\text{true}}}),
\end{align*}

where $i^\star$ denotes the index of the largest entry in the noise vector $\epsilon$. 
Then $i^\star$ is uniformly distributed over the integers $1,\ldots,K$, and it is shown in the supplement that $\epsilon_{i^\star} = \Omega(\sigma \sqrt{\ln(K)})$ with high probability. 
Similarly, with high probability $\epsilon_{i_{\text{true}}} = O(\sigma)$. 
We see therefore that each step of sgd with output randomization attempts to update the parameters $\theta$ to ensure that the logit of the true class is larger than the logit of a uniformly randomly chosen class by an additive factor on the order of $\Omega(\sigma\sqrt{\ln(K)})$:

\[
f_{\theta}(x)_{i_{\text{true}}} > f_{\theta}(x)_{i^\star} + \Omega(\sigma\sqrt{\ln(K)}).
\]
This margin can be increased by increasing the parameter $\sigma$. Thus we see that output randomization (applied to the logits, and using cross-entropy loss) attempts to impose a margin on average, in the sense that in each step of sgd it randomly chooses an incorrect class and attempts to update the model so that the logit of the correct class is significantly larger than the logit of the selected incorrect class.

\section{Empirical Results} \label{empirical-results}
\label{sec:results}
We evaluate with a GeForce RTX 2070 GPU (12GB), a 4-core Intel(R) Core(TM) i7-7700K CPU @ 1.15GHz, and 32GB of memory~\footnote{Our code will be made available upon publication.}. We discuss our experiments with black box finite difference attacks in Section \ref{blackresults} and our experiments with white box PGD attacks in Section \ref{whiteresults}.

\subsection{Black Box: Finite Difference Attacks} \label{blackresults}
To evaluate output randomization against finite difference attacks, we select three attacks (ZOO~\cite{Chen2017}, QL~\cite{Ilyas2018}, and BAND~\cite{Ilyas2018bandit}) on benchmark image classification datasets (MNIST~\cite{lecun2010mnist}, CIFAR10~\cite{krizhevsky2014cifar}, and ImageNet~\cite{deng2009imagenet}). Keeping with \cite{Chen2017}, we keep $h=0.0001$. For all defended models, we use $\epsilon \sim \mathcal{N}(0, \sigma^2 \cdot \mathbf{I}_C)$. We also adapt the attacks by allowing the adversary to average over the output randomization in an attempt to bypass the defense.  

We trained models for MNIST, CIFAR10, and ImageNet that achieved 99\%, 79\%, and $ {\sim} $ 72\% test set accuracies, respectively, following \cite{Chen2017}. Non-adaptive attacks were conducted using the parameters suggested by the attacks. The adaptive attackers (i) can average over multiple queries and (ii) had their maximum query limit was doubled. For all of our experiments, we averaged the attack success rate over 100 images and report the mean value over 30 runs.

    \begin{figure}[h]
        \includegraphics[width=\linewidth]{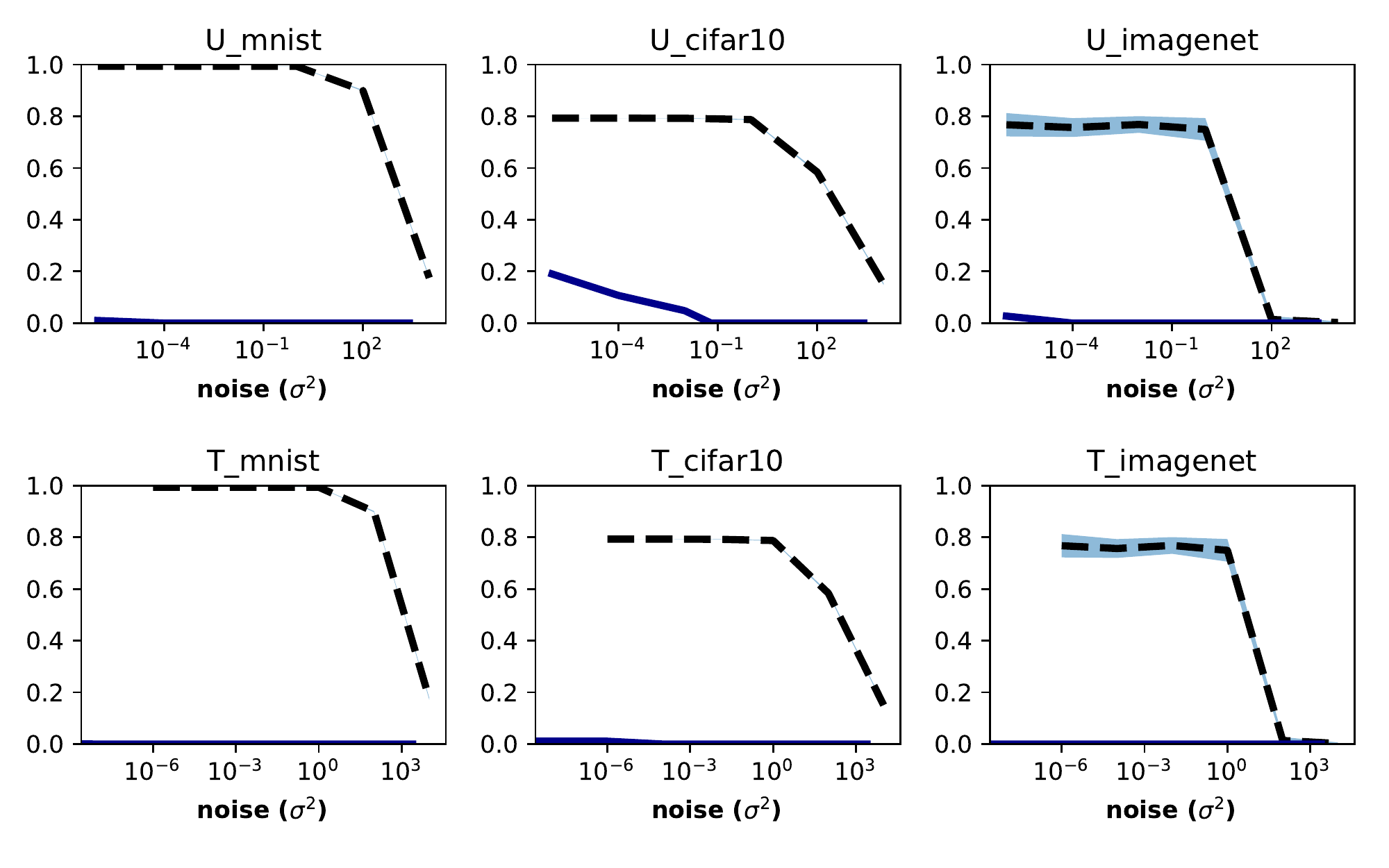}
        \caption{ZOO success rate (solid) and test set accuracy (dashed) vs variance for the non-adaptive attacker. Top row shows untargeted and bottom row shows targeted attacks. Output randomization blocks attacks even at very small noise levels ($\sigma^2 < 1\text{e-}6$).}
        \label{fig:black_asr_noise}
    \end{figure}

Our main set of experiments is shown in Figures \ref{fig:black_asr_noise} 
and show the effects of output randomization on the non-adaptive and adaptive ZOO attacks. Our experimentation with the adaptive attack showed that the attacker is not able to successfully mount an attack. We have included a figure in the supplemental material and continue discussion below. We show that the defense reduces the attack success rate significantly even in the adaptive attacker setting. The effectiveness of the defense was not affected by targeted or untargeted attacks. Table~\ref{tab:bb_imagenet} summarizes the results for three finite difference based black box attacks on ImageNet. 

\begin{table}[ht]
  \caption{Output randomization vs 3 black box attacks on 100 correctly classified ImageNet examples measured by attack success rate.}
  \label{tab:bb_imagenet}
  \centering
  \begin{tabular}{lccc}
    Variance $\sigma^2$ & ZOO & QL & BAND\\
    \midrule
    (undefended) & 0.69 &  1.00 & 0.92 \\
    1.00\text{e-}4 & 0.03 &  0.73 & 0.58 \\
    1.00\text{e-}2 & 0.00 & 0.02 & 0.07 \\
    5.76\text{e-}2 & 0.00 & 0.01 & 0.06 \\
    \bottomrule
  \end{tabular}
\end{table}

Figure \ref{fig:black_asr_noise} shows the effect of increasing noise levels on test set accuracy. Output randomization is an effective defense against black box attacks at noise levels as small as $\sigma^2=1\text{e-}4$ where model performance is identical to undefended models.

\subsubsection{Using the Newton Solver for the ZOO Attack}
\cite{Chen2017} suggests using the Adam solver for their proposed ZOO attack for its efficiency over the Newton solver. However, we believe that the lower average $l_2$ perturbation is the cause of low success rates against output randomization even when using the adaptive approach. 

\begin{figure}[h]
    \centering
    \includegraphics[width=\linewidth]{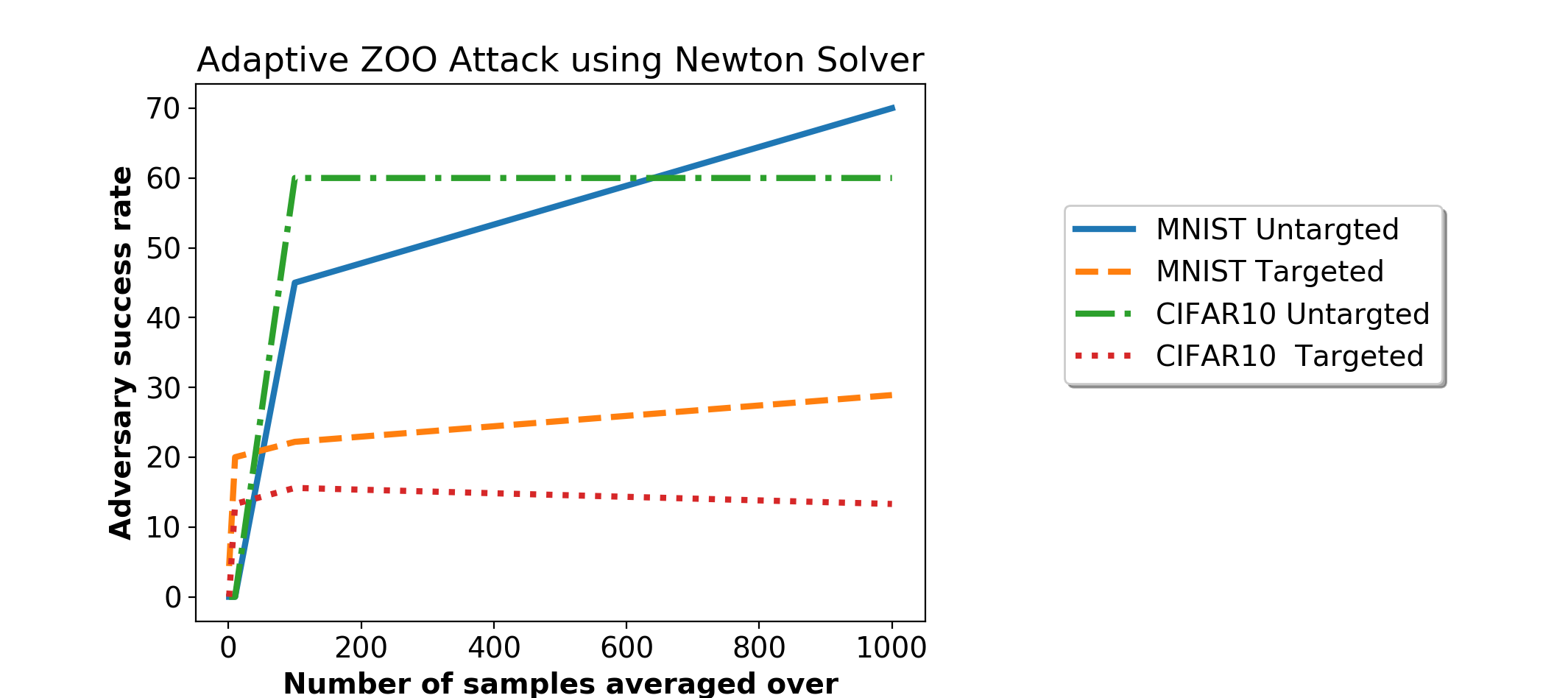}
    \caption{Results of untargeted and targeted adaptive ZOO attack against MNIST and CIFAR10 trained models using the output randomization defense with a $\sigma=1$.}
    \label{fig:newton}
\end{figure}

\begin{table*}[ht]
  \caption{Accuracy and robustness of Wide Resnet 32-10 and its non-wide variant against the PGD(Xent) and PGD(CW) attacks. We present the results using no defense, PGD adversarial training, output randomization training with $\sigma=100$, and a combination of the two defenses.}
  \label{tab:wres}
  \centering
  \begin{tabular}{llccc}
    Model &Defense & Clean Accuracy & PGD (Xent) & PGD (CW) \\
    \midrule
    Wide Resnet 32-10 &(undefended) & 95.32\% &  0.0\%  & 0.0\% \\
    &PGD Trained \cite{Madry2017} & 86.30\% &  47.95\% & 48.61\% \\
    &OR[stddev=100] Trained & 94.38\% & 88.07\% & 0.0\% \\
    &PGD+OR[stddev=100] Trained & 88.92\% & 82.03\% & 42.09\% \\
    \midrule
    Resnet 32 &(undefended) & 92.77\% &  0.0\%  & 0.0\% \\
    &PGD Trained \cite{Madry2017} & 77.87\% &  45.37\% & 44.27\% \\
    &OR[stddev=100] Trained & 91.51\% & 84.99\% & 0.0\% \\
    &PGD+OR[stddev=100] Trained & 81.98\% & 62.63\% & 39.62\% \\    
    \bottomrule
  \end{tabular}
\end{table*}

Unlike the Adam solver, the Newton solver allows the adaptive adversary to successfully attack the target model by averaging over $k$ samples per query, as shown in Figure \ref{fig:newton}. We see similar behavior when decreasing the value of $\sigma$ for the defense, however, we report the results using $\sigma=1$ as we experimentally found it to be the lowest value for which the non-adaptive adversary is unsuccessful. For higher values of $\sigma$, the adversary is able to mount attacks, however is unable to match the success rates shown here. We believe this is due to our computational limitations on allowing a higher number of queries.

\subsection{White Box: PGD attack} \label{whiteresults}
To evaluate the effectiveness of "training with output randomization" as a defense against white-box attacks, we compare our method against Projected Gradient Descent (PGD) adversarial training as described in \cite{Madry2017} using the CIFAR10 dataset.
We follow the experimental parameters set in \cite{Madry2017}.
Our PGD attacks use $\epsilon=0.8$, step size of 2 with 10 total steps, and evaluate on 10000 evaluation examples.
Additionally, the defenses are applied to Wide Resnet 32-10 (\textit{WResnet}) and Resnet 32 (\textit{Resnet}), which are trained for 80000 iterations.
We report the results of output randomization training sampling noise from a zero-mean Gaussian with $\sigma=100$.
We show the results of different $\sigma$ values in the supplementary material.

As a baseline, we trained both \textit{WResnet} and \textit{Resnet} on the CIFAR10 dataset. For each architecture, we trained additional models using PGD adversarial training, output randomization training, and both defenses.
We evaluated each model against the same white-box adversary and report the results in Table \ref{tab:wres}. 

We found that output randomization trained models achieved higher accuracies on the clean test set and on adversarial examples created using PGD(Xent) compared to PGD adversarially trained models. For \textit{WResnet}, we achieve a 12.08\% and 41.92\% improvement in clean test set accuracy and PGD(Xent) adversarial examples, respectively, over PGD adversarial training. For \textit{Resnet}, we see similar behavior with improvements of 13.64\% and 39.62\% over PGD adversarial training. We also found that our defense is weak against PGD(CW) attacks. This is further discussed in Section \ref{cwresults}

Output randomization training increases the models' robustness against PGD(Xent) attacks with high effectiveness and $<$1\% accuracy-robustness trade-off without requiring randomization at test time.

\subsubsection{Carlini-Wagner Loss Function} \label{cwresults}
We found the PGD(CW) to have a $>$99\% success rate against output randomized trained model. We believe that the PGD(Xent) attacker gets stuck in a local maximum. However, the PGD(CW) attacker is able to overcome this because it is additionally directed by the most probable incorrect label. In Figure \ref{fig:loss}, we show the loss over each step of an example PGD(CW) attack. We collect the loss over 40 steps to achieve a better understanding of how the loss behaves over time.

\begin{figure}[h]
    \centering
    \includegraphics[width=0.85\linewidth]{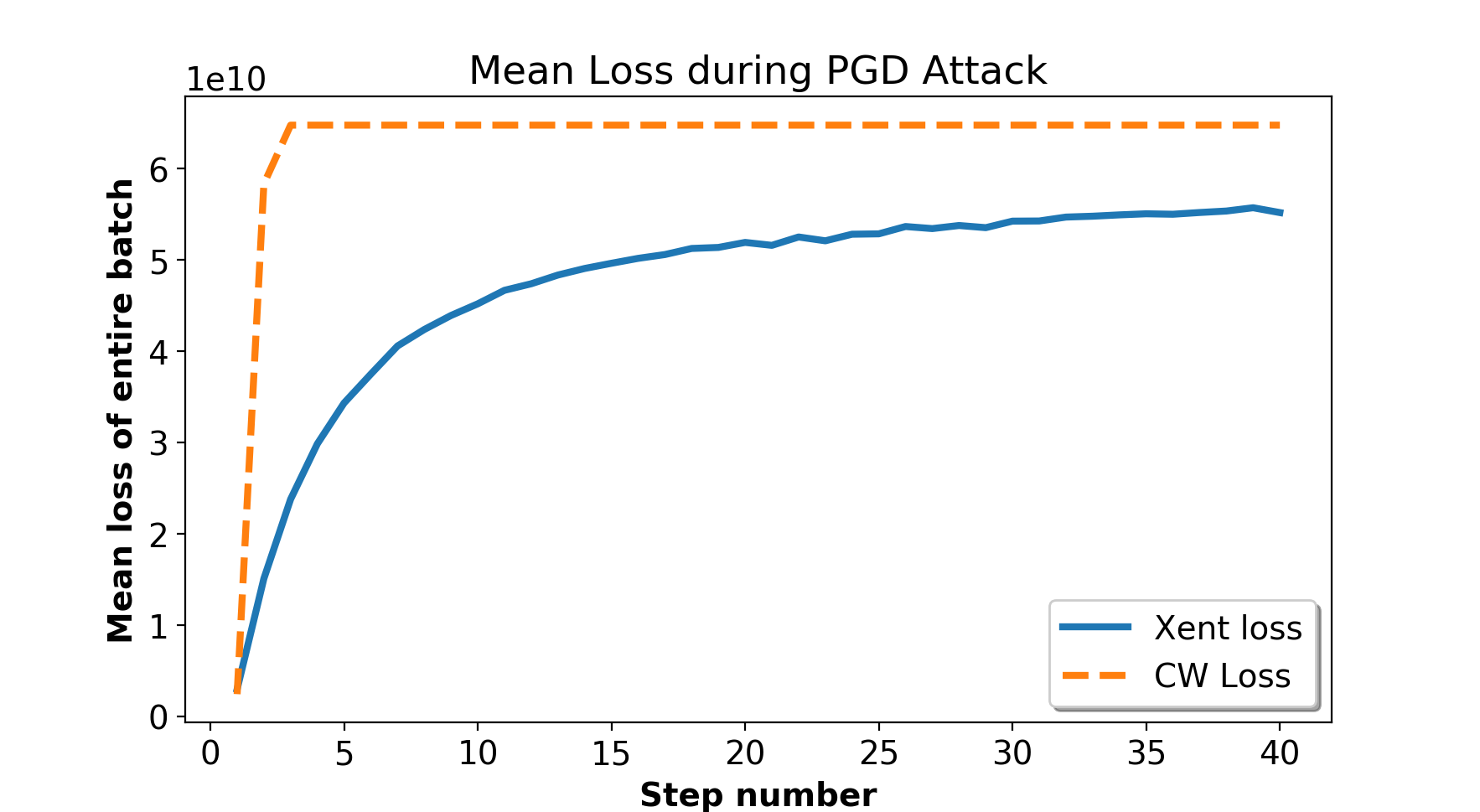}
    \caption{The mean loss scaled by 1e10 during example PGD(Xent) and PGD(CW) attacks over 40 steps against Wide Resnet 32-10 trained with output randomization ($\sigma=100$).}
    \label{fig:loss}
\end{figure}

The PGD(Xent) loss gradually increases before asymptotically approaching ${\sim}$5e10. However, the PGD(CW) loss rapidly increases to ${\sim}$6.5e10. This leads us to believe that our prior hypothesis is correct and the PGD(Xent) attacker is caught in a local maximum.

Due to the simplicity of output randomization, it can used in conjunction with PGD adversarial training. For both models, we see that defending a model with both adversarial training and output randomization training helped offset weaknesses from each defense. Compared to PGD adversarial training, using both output randomization and PGD adversarial training increased the clean test set accuracy of \textit{WResnet} and \textit{Resnet} from 86.30\% to 88.92\% and from 77.87\% to 81.98\%, respectively. Additionally, we see that the models' ability to correctly classify PGD(Xent) attacks increased by ${\sim}17\%$ and ${\sim}35\%$. Lastly, we see that PGD adversarial training offsets output randomization training's weakness against PGD(CW) attacks.

\section{Conclusion}
\label{sec:conclusion}
In this paper, we present two ways output randomization can be used to harden deep neural networks against adversarial attacks. By introducing simple randomization to the output of the model,  we show that finite different gradient estimate attacks can be thwarted. We evaluated output randomization against three successful attacks on three benchmark datasets. Additionally, we show that training models with output randomization increases their robustness against PGD attacks. We empirically show that Wide Resnet 32-10 achieves up to 88.07\% accuracy on PGD adversarial examples with only a 1\% decrease in clean test set accuracy. Furthermore, due to the simple nature of the presented defense, output randomization training can be used together with other defenses.

\bibliographystyle{named}
\bibliography{ijcai21}

\clearpage








\section{Appendix}

\subsection{ZOO black box attack}
The Zeroth Order Optimization based black box (ZOO) attack~\cite{Chen2017} is a method for creating adversarial examples that only requires input and output access to the model. ZOO adopts an iterative optimization based approach, similar to the Carlini \& Wagner (C\&W) attack~\cite{Carlini2017}. The attack begins with a correctly classified input image $x$, defines an adversarial loss function that scores perturbations $\delta$ applied to the input, and optimizes the adversarial loss function using gradient descent to find $\delta^*$ that creates an adversarial example. 

The primary adversarial loss used by the ZOO attack for targeted attacks is given by:
\begin{equation}
    L(x,t) = \max{\left\{ \max_{i\neq t}{\left\{ \log{f(x)}_i - \log{f(x)}_t\right\}}, -\kappa \right\} }
    \label{targeted}
\end{equation}
Where $x$ is an input image, $t$ is a target class, and $\kappa$ is a tuning parameter. Minimizing this loss function over the input $x$ causes the classifier to predict class $t$ for the optimized input. For untargeted attacks, a similar loss function is used:
\begin{equation}
    L(x) = \max{\left\{ \log{f(x)}_i - \max_{j\neq i}{\left\{ \log{f(x)}_j\right\}}, -\kappa \right\} }
    \label{untargeted}
\end{equation}
where $i$ is the original label for the input $x$. This loss function simply pushes $x$ to enter a region of misclassification for the classifier $f$. 

In order to limit distortion of the original input, the adversarial loss function is combined with a distortion penalty in the full optimization problem. This is  given by:

\[
    \min_{x}{\Vert x - x_0 \Vert_2^2 + c\cdot L(x,t) };\ \ \text{subject to } x \in [0,1]^n
\]

ZOO uses "zeroth order stochastic coordinate descent" to optimize input on the adversarial loss directly. This is most easily understood as a finite difference estimate of the gradient of the input with the symmetric difference quotient~\cite{Chen2017}:
\[
\frac{\delta L}{\delta x_i} \approx g_i := \frac{L(x+he_i) - L(x-he_i)}{2h}  
\]
with $e_i$ as the basis vector for coordinate/pixel $i$ and step size $h$ set to a small constant. ZOO uses this approximation of the gradients to create an adversarial example from the given input.  We keep $h=.0001$ in all experiments to stay consistent with the original experimental setup. However, in Section \ref{empirical-results-appendix}, we present result with varying $h$ values.

\subsection{Query Limited (QL) black box attack}
A similar approach to ZOO is adopted by \cite{Ilyas2018} in a query limited setting. Like ZOO, the QL attack estimates the gradients of the adversarial loss using a finite difference based approach. However, the QL attack reduces the number of queries required to estimate the gradients by employing a search distribution. Natural Evolutionary Strategies (NES)~\cite{Wierstra2014natural} is used as a black box to estimate gradients from a limited number of model evaluations. Projected Gradient Descent (PGD)~\cite{Madry2017} is used to update the adversarial example using the estimated gradients. PGD uses the sign of the estimated gradients to perform an update:
$x^t = x^{t-1} - \eta \cdot \text{sign}(g_t)$, with a step size $\eta$ and the estimated gradient $g_t$. The estimated gradient for the QL attack using NES is given by:
\[
g_t = \sum_{i=1}^m{\frac{L(x + \sigma \cdot u_i)\cdot u_i - L(x - \sigma \cdot u_i) \cdot u_i}{2m\sigma}}
\]
where $u_i$ is sampled from a standard normal distribution with the same dimension as the input $x$, $\sigma$ is the search variance, and $m$ is the number of samples used to estimate the gradient.

\subsection{Regularizing effect of Output Randomization}
Proofs are provided for the two claims about Gaussian random variables used in section 6 of the main paper. Both claims follow from classical estimates on the Mill's ratio of a standard Gaussian: if $\Phi^c(t) := \frac{1}{\sqrt{2\pi}} \int_t^\infty \exp(\tfrac{-x^2}{2})\, dx$ denotes the complementary error function, then
\begin{multline}
\label{eqn:mills}
\frac{1}{\sqrt{2\pi}} \cdot \frac{1}{t + 1} \exp\left(\frac{-t^2}{2}\right) \geq \Phi^c(t) \\
\geq \frac{1}{\sqrt{2\pi}} \cdot \frac{1}{t + \sqrt{t^2+2}} \exp\left(\frac{-t^2}{2}\right)
\end{multline}
when $t$ is nonnegative~\cite{NIST:DLMF}.

The first claim is that a single draw of a Gaussian is $O(\sigma)$ (or negative), with high probability. 
\begin{lemma} Consider $g \sim \mathcal{N}(0, \sigma^2)$. For any probability $\delta \in (0,1)$, it holds that
$$\mathbb{P}\left(g < \sqrt{2 \ln\left(\delta^{-1}\right)}\sigma\right) \geq 1 - \delta.$$
\end{lemma}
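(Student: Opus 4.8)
The plan is to rephrase the desired bound in terms of the complementary error function $\Phi^c$ and then apply the upper half of the Mill's ratio estimate \eqref{eqn:mills}. First I would normalize: since $g/\sigma \sim \mathcal{N}(0,1)$, the event $\{\, g \geq t\sigma \,\}$ has probability exactly $\Phi^c(t)$, so the claim is equivalent to showing $\Phi^c\bigl(\sqrt{2\ln(\delta^{-1})}\bigr) \leq \delta$. Set $t := \sqrt{2\ln(\delta^{-1})}$ and note $t \geq 0$, because $\delta \in (0,1)$ forces $\ln(\delta^{-1}) > 0$; this is precisely the regime in which \eqref{eqn:mills} is valid.

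Next I would substitute $t$ into the upper bound of \eqref{eqn:mills}. The key observation is that $\exp(-t^2/2) = \exp\bigl(-\ln(\delta^{-1})\bigr) = \delta$, so the upper bound collapses to $\frac{1}{\sqrt{2\pi}} \cdot \frac{1}{t+1} \cdot \delta$. Since $t \geq 0$ we have $\frac{1}{t+1} \leq 1$, and $\frac{1}{\sqrt{2\pi}} < 1$, hence $\Phi^c(t) \leq \delta$. Taking complements yields $\mathbb{P}(g < t\sigma) \geq 1 - \delta$, which is the statement.

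There is essentially no obstacle beyond bookkeeping: the only point requiring care is checking the hypothesis $t \geq 0$ of the Mill's ratio bound, which holds exactly because $\delta < 1$. It is worth remarking that the inequality is far from tight — the discarded factors $\frac{1}{\sqrt{2\pi}}$ and $\frac{1}{t+1}$ leave considerable slack, so $\sqrt{2\ln(\delta^{-1})}$ could be replaced by a somewhat smaller quantity — but the clean stated form is all that is needed downstream, so I would not pursue the sharper constant.
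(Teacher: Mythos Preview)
Your proposal is correct and follows essentially the same route as the paper: normalize to a standard Gaussian, apply the upper half of the Mill's ratio estimate \eqref{eqn:mills} with $t = \sqrt{2\ln(\delta^{-1})}$, observe that $\exp(-t^2/2) = \delta$, and discard the factors $\frac{1}{\sqrt{2\pi}}$ and $\frac{1}{t+1}$, both of which are at most $1$. The only cosmetic difference is that the paper bounds $1 - \Phi^c(C)$ from below whereas you bound $\Phi^c(t)$ from above, which is the same inequality.
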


\begin{proof}
Let $\tilde{g} = g/\sigma$, so that $\tilde{g}$ is a standard Gaussian. By Equation~\ref{eqn:mills}, for any constant $C > 0$,
\begin{align*}
 \mathbb{P}(g < C \sigma) & = \mathbb{P}(\tilde{g} < C) = 1 - \Phi^c(C) \\
 & \geq 1 - \frac{1}{\sqrt{2\pi}} \cdot \frac{1}{C + 1} \exp(-C^2/2) \\
 & \geq 1 - \frac{1}{\sqrt{2\pi}} \exp(-C^2/2).
\end{align*}
The claim follows by noting that $\sqrt{2 \ln(\delta^{-1})}$ is positive and assigning this value to $C$ ensures that $\exp(-C^2/2) = \delta$.
\end{proof}
The second claim is that the largest of $K$ i.i.d. draws of a Gaussian is $\Omega(\sigma \sqrt{\ln(2K)})$ with high probability.
\begin{lemma} Consider $X \sim \mathcal{N}(0, \sigma^2 I_K )$. Fix a $D > 0$ and set $\nu = \ln(2 D \ln(2K))$, then 
$$\mathbb{P}\left( \max\limits_{i=1,\ldots,K} X_i < \sigma \sqrt{2 \ln(2K) - \nu} \right) \leq \exp\left( - \frac{\sqrt{D}}{8 \sqrt{2\pi}} \right).$$
\end{lemma}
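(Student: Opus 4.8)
The plan is to normalize to a standard Gaussian and then combine the elementary bound $1-x\le e^{-x}$ with the lower half of the Mill's ratio estimate in Equation~\ref{eqn:mills}. Writing $\tilde X_i = X_i/\sigma$ and $t = \sqrt{2\ln(2K)-\nu}$, the coordinates $\tilde X_i$ are i.i.d.\ standard Gaussians, so the event in question has probability
\[
\mathbb{P}\Bigl(\max_{i} \tilde X_i < t\Bigr) = \bigl(\mathbb{P}(\tilde X_1 < t)\bigr)^K = \bigl(1 - \Phi^c(t)\bigr)^K \le \exp\bigl(-K\,\Phi^c(t)\bigr).
\]
It therefore suffices to prove the clean estimate $K\,\Phi^c(t) \ge \sqrt{D}/(8\sqrt{2\pi})$, and the lower bound in Equation~\ref{eqn:mills} reduces this to controlling the two factors $K\exp(-t^2/2)$ and $t+\sqrt{t^2+2}$ separately.

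First I would evaluate $K\exp(-t^2/2)$ exactly. Since $t^2 = 2\ln(2K)-\nu$ and $e^{\nu} = 2D\ln(2K)$ by definition of $\nu$, one gets $\exp(-t^2/2) = \tfrac{1}{2K}e^{\nu/2} = \tfrac{1}{2K}\sqrt{2D\ln(2K)}$, hence $K\exp(-t^2/2) = \sqrt{D\ln(2K)/2}$. Next I would bound the denominator: using $\sqrt{t^2+2}\le t+\sqrt{2}$ (valid for $t\ge 0$), the bound $t\le\sqrt{2\ln(2K)}$, and $\ln(2K)\ge 1/4$, we obtain $t+\sqrt{t^2+2}\le 2\sqrt{2\ln(2K)}+\sqrt{2}\le 4\sqrt{2}\,\sqrt{\ln(2K)}$. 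Substituting both estimates into Equation~\ref{eqn:mills} yields
\[
K\,\Phi^c(t) \ge \frac{1}{\sqrt{2\pi}}\cdot\frac{K\exp(-t^2/2)}{t+\sqrt{t^2+2}} \ge \frac{1}{\sqrt{2\pi}}\cdot\frac{\sqrt{D\ln(2K)/2}}{4\sqrt{2}\,\sqrt{\ln(2K)}} = \frac{\sqrt{D}}{8\sqrt{2\pi}},
\]
and combining this with the display above completes the argument.

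The one step that needs care is the bound $t\le\sqrt{2\ln(2K)}$, i.e.\ that $\nu\ge 0$, equivalently $2D\ln(2K)\ge 1$; this is the regime in which the lemma is intended to be applied. When $2D\ln(2K)<1$ the threshold $\sigma\sqrt{2\ln(2K)-\nu}$ grows and the probability on the left tends to $1$, but the right-hand side $\exp(-\sqrt{D}/(8\sqrt{2\pi}))$ also tends to $1$, so that regime only requires a separate, cruder estimate and the displayed chain above carries the real content. Apart from that, everything is routine: the factorization of the maximum into a $K$-th power uses independence, and the rest is just the two scalar inequalities above together with the substitution $e^{\nu}=2D\ln(2K)$.
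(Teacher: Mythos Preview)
Your argument is correct and follows essentially the same route as the paper: normalize, factor via independence, apply $(1-x)^K\le e^{-Kx}$, invoke the lower Mill's bound, evaluate $K\exp(-t^2/2)$ via the definition of $\nu$, and then crudely bound the denominator $t+\sqrt{t^2+2}$. The only cosmetic difference is that the paper bounds the denominator by $2(q+1)$ and then uses $\sqrt{2\ln(2K)}/(\sqrt{2\ln(2K)}+1)\ge\tfrac12$, whereas you bound it by $4\sqrt{2}\sqrt{\ln(2K)}$ directly; both rely on $\nu\ge0$, which you flag explicitly and the paper uses silently.
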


\begin{proof}
For convenience, define $q = \sqrt{2\ln(2K) - \nu}$ and let $\tilde{X} = \frac{1}{\sigma}X$. This gives that $X \sim \mathcal{N}(0, I_k)$ and
\begin{align*}
    \mathbb{P}(\max\limits_{i=1,\ldots,K} X_i < \sigma\sqrt{2 \ln(2 K) - \delta)} & \leq \mathbb{P}(\max\limits_{i=1,\ldots,K} \tilde{X}_i < q).
\end{align*}
By Equation~\ref{eqn:mills},
\begin{align*}
\mathbb{P}(\max\limits_{i=1,\ldots,K} \tilde{X}_i < q) & = \mathbb{P}(\tilde{X}_1 < q)^K = (1 - \Phi^c(q))^K \\
& \mkern-54mu\leq \left(1 - \frac{1}{\sqrt{2\pi}} \cdot \frac{1}{q + \sqrt{q^2+2}} \exp\left(\frac{-q^2}{2}\right) \right)^K  \\
& \mkern-54mu \leq \left(1 - \frac{1}{2\sqrt{2\pi}} \cdot \frac{1}{q + 1} \exp\left(\frac{-q^2}{2}\right)\right)^K.
\end{align*}
The last inequality holds because the subadditivity of the square root function implies $q + \sqrt{q^2 + 2} \leq 2(q + 1).$ 

Using the fact that $1 + x \leq \exp(-x)$ for any $x$, we see that
\begin{align*}
\mathbb{P}(\max\limits_{i=1,\ldots,K} \tilde{X}_i < q) & \leq
 \exp\left(-\frac{1}{2\sqrt{2\pi}} \cdot \frac{K}{q + 1} \exp\left(\frac{-q^2}{2}\right) \right) \\
 & \mkern-16mu = \exp\left(-\frac{1}{4\sqrt{2\pi}} \cdot \frac{\exp(\nu/2)}{q + 1}\right) \\
 & \mkern-16mu= \exp\left(-\frac{1}{4\sqrt{2\pi}} \cdot \frac{\sqrt{2D\ln(2K)}}{\sqrt{2\ln(2K) - \nu} + 1}\right) \\
 & \mkern-16mu\leq \exp\left(-\frac{1}{4} \cdot \sqrt{\frac{D}{2\pi}} \cdot \frac{\sqrt{2\ln(2K)}}{\sqrt{2\ln(2K)} + 1}\right) \\
 & \mkern-16mu\leq \exp\left(-\frac{1}{8} \cdot \sqrt{\frac{D}{2\pi}} \right).
\end{align*}
In the last inequality, we used the fact that $K \geq 1$, which implies that $\sqrt{2 \ln(2K)}/(\sqrt{2 \ln(2K)} + 1) \geq \tfrac{1}{2}.$
\end{proof}

\subsection{Empirical results: Black box finite difference} \label{empirical-results-appendix}
In this section, we present supplemental empirical results in our evaluation of output randomization as a defense against black box finite difference attacks.

\subsubsection{Sanity check}
    \begin{figure}[h]
        \centering
        \includegraphics[width=\linewidth]{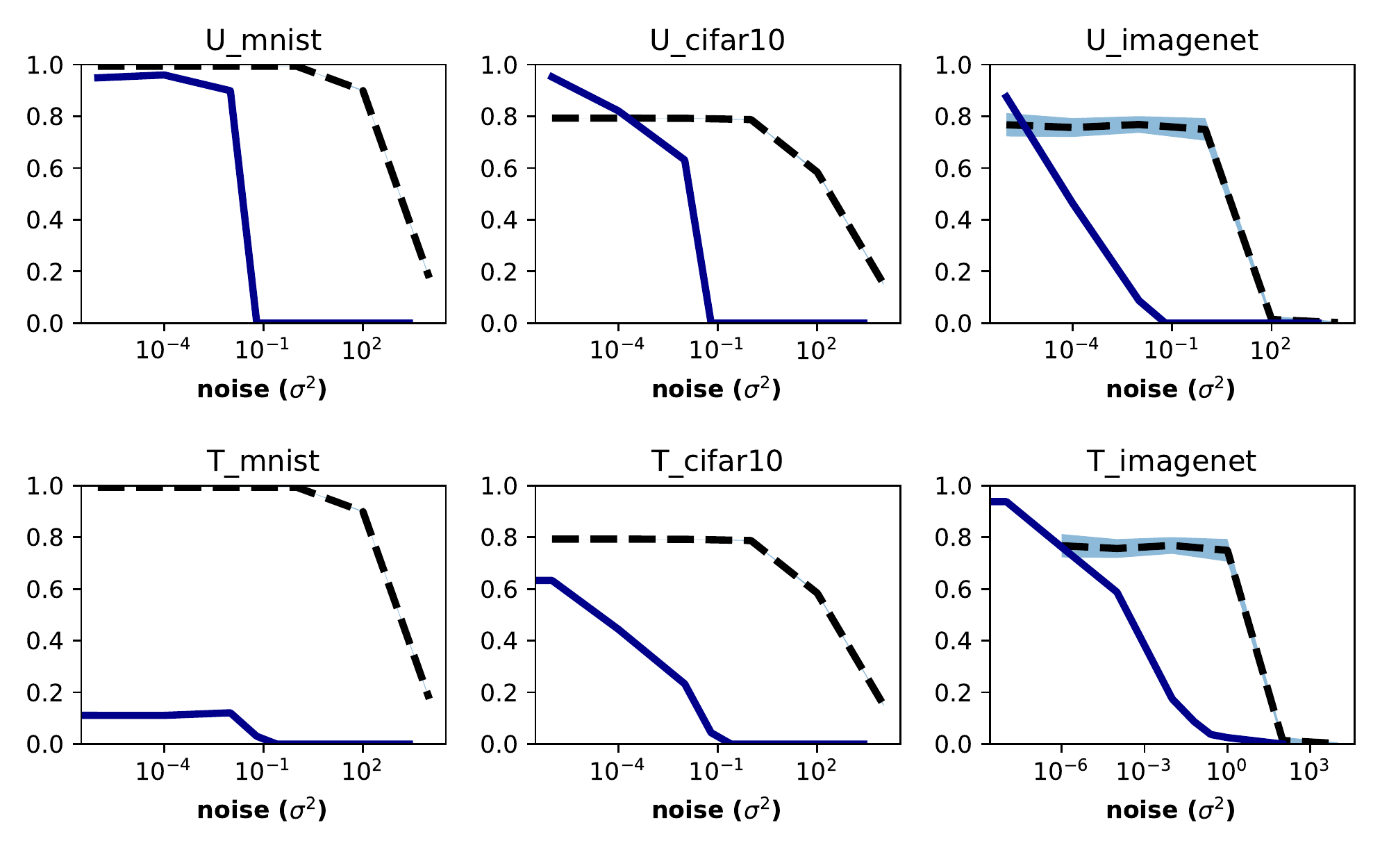}
        \caption{CW Attack success rate (solid) and test set accuracy (dashed) vs variance for the non-adaptive attacker. Top row shows untargeted and bottom row shows targeted attacks. Output randomization is not effective against a white box attacker at small noise levels.}
        \label{fig:white_asr_noise}
    \end{figure}
    
    \begin{figure}[h]
        \centering
        \includegraphics[width=\linewidth]{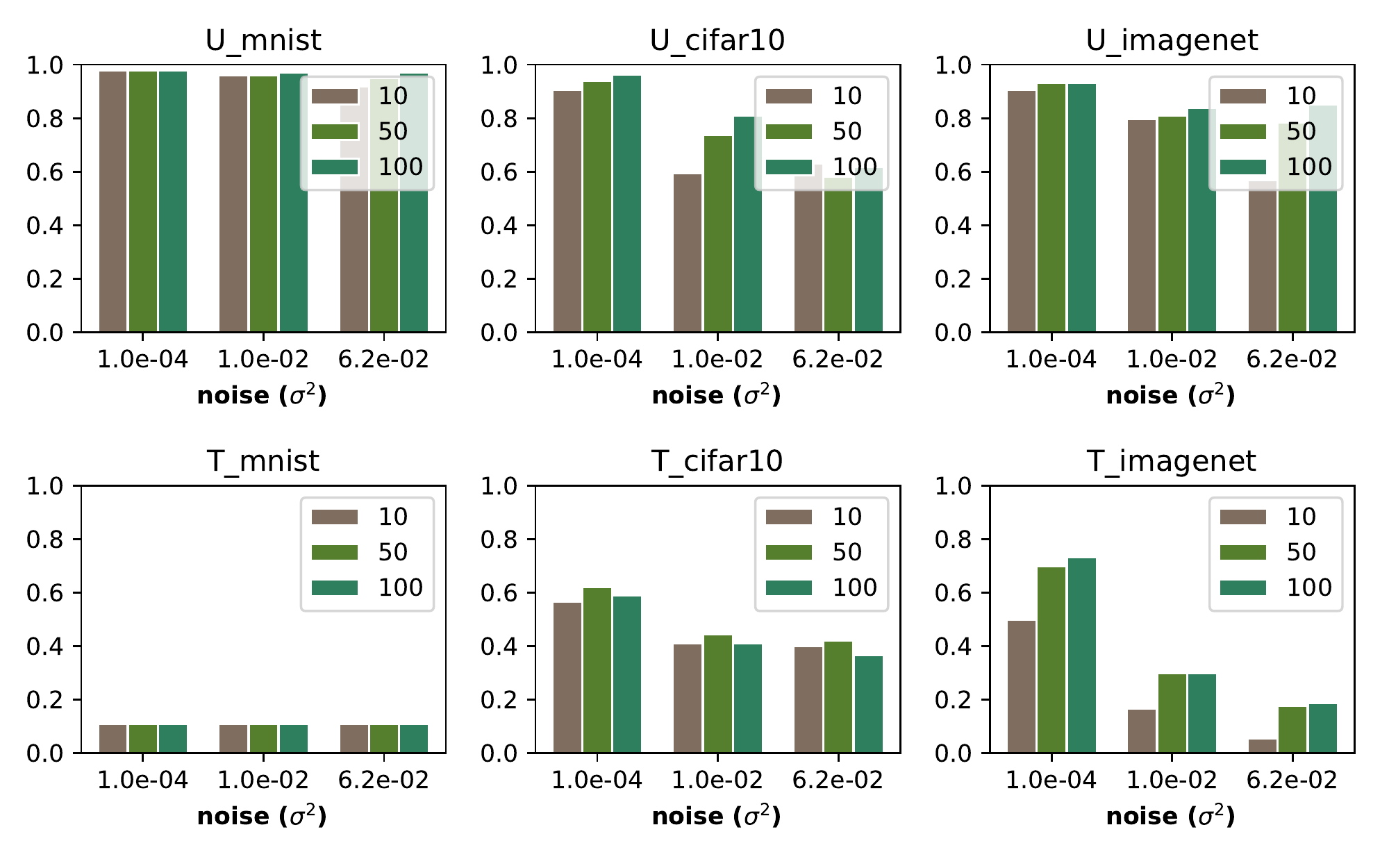}
        \caption{CW Attack success rate vs variance (groups) for adaptive attacker with increasing averaging (10, 50, 100 samples). Top row shows untargeted and bottom row shows targeted attacks. Averaging allows the white box adaptive attacker to overcome output randomization.}
        \label{fig:white_asr_samps}
    \end{figure}

As a sanity check, we also measure the attack success rate of a white box attacker (Carlini \& Wagner L2~\cite{Carlini2017} attack) with randomized output. We find that the defense has some success in defending against this attack, however the adaptive white box attacker is able to overcome output randomization by averaging over a small number of samples. This is summarized in Figures \ref{fig:white_asr_noise} and \ref{fig:white_asr_samps}.

\subsubsection{Varying step size h}
A small step size $h$ increases the finite difference estimate's sensitivity to small random perturbation. We repeated our experiments against an undefended and defended (output randomization with noise $\sigma^2=0.01$) using the untargeted ZOO attack to test its behavior when the step size increases. The results are summarized in Figure \ref{fig:varyh}. We found that increasing $h$ slightly increases the attack's success rate against output randomization, however it significantly drops the attack's success rate on the undefended network. 

We believe that because the attacker is constrained to small $h$ values, a small $\sigma^2$ is still effective against finite difference attacks due to its sensitivity to small perturbation. This sensitivity is reflected in the loss function's values during an attack.

\begin{figure}[h]
    \centering
    \includegraphics[width=\linewidth]{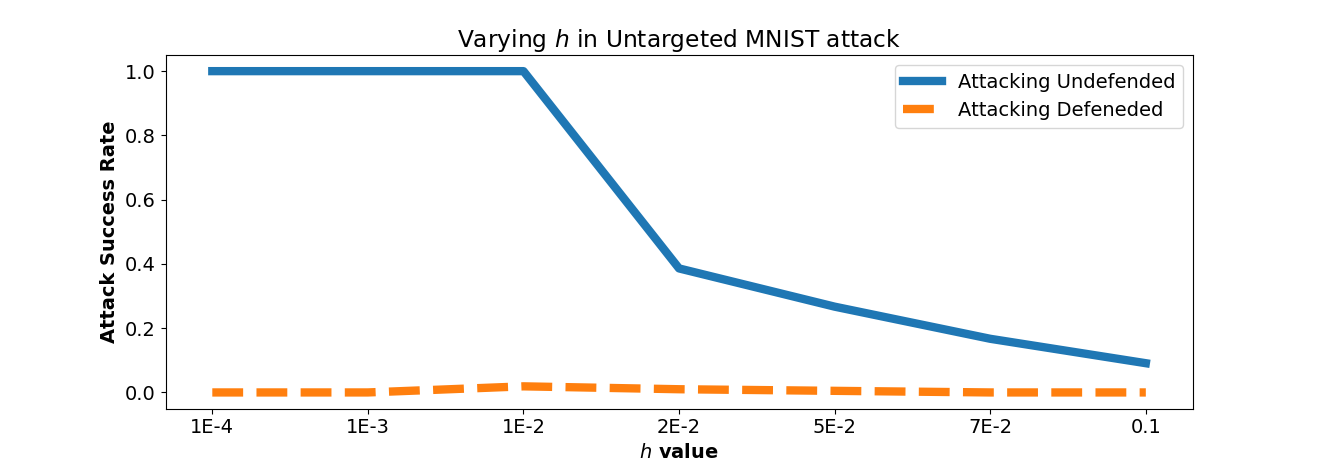}
    \caption{Untargeted ZOO attack against undefended (solid) and defended (dashed) DNNs. Both DNN's were trained on MNIST and the defended DNN uses noise $\sigma^2=0.01$. Increasing $h$ significantly drops the attack's success rate.}
    \label{fig:varyh}
\end{figure}

\begin{figure}[h]
    \centering
    \includegraphics[width=\linewidth]{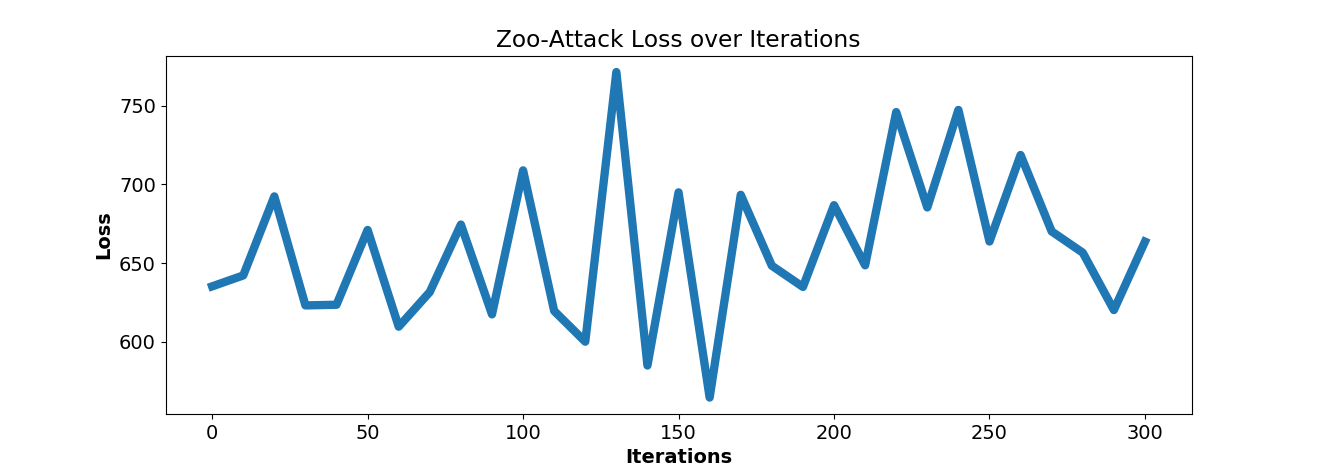}
    \caption{Loss over iterations of an untargeted ZOO attack against a DNN with output randomization (noise $\sigma^2=0.01$). The erratic behaviour leads to the attack stopping early.}
    \label{fig:zooloss}
\end{figure}

\subsubsection{Impact of output randomization on ZOO loss}
Figure \ref{fig:zooloss} shows an instance of the ZOO attack against a DNN defended with output randomization.
The loss is erratic and the attack stops because no improvement is found.
This behavior is consistent across our experiments with output randomization.

\begin{figure}[h]
    \centering
    \includegraphics[width=0.8\linewidth]{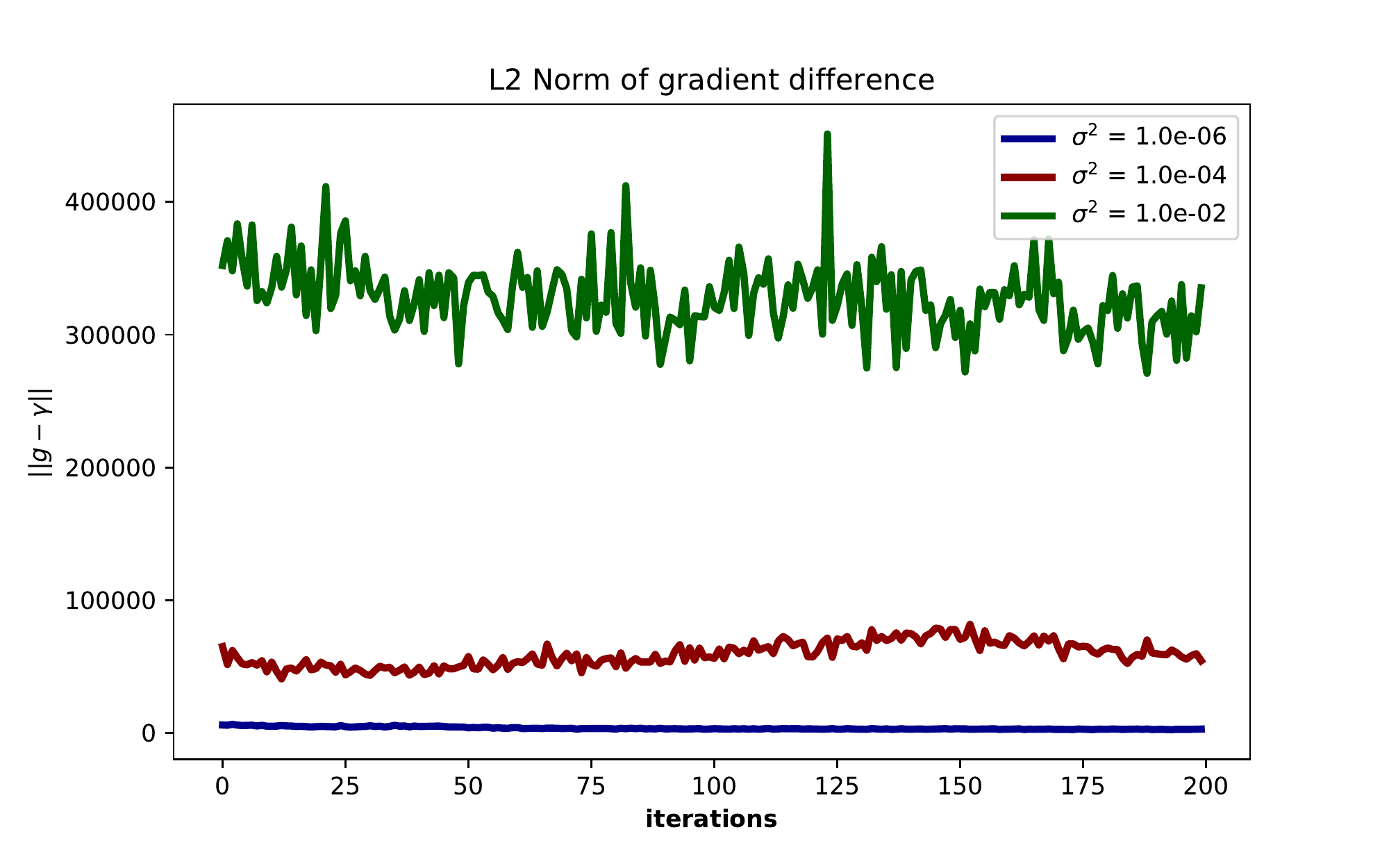}
    \caption{$L_2$ norm of difference between the true finite difference gradient calculated by ZOO on the undefended model and the finite difference gradient calculated on the defended model with increasing variance. As expected the error is significantly higher for noise with larger variance.}
    \label{fig:cifar10_grads_noise}
\end{figure}

Figure~\ref{fig:cifar10_grads_noise} shows the error between the finite difference gradients for the ZOO attack on a defended and undefended model at varying noise levels. As we expect, increased noise levels cause the overall error (measured by the norm of difference of the gradients) to increase dramatically.


\begin{table*}[h]
  \caption{ZOO black box attack success rate vs three defenses}
  \label{tab:distill}
  \centering
  \begin{tabular}{llccc}
    Dataset & Ex. Type & Distillation\cite{Papernot2016b} & Mitigation\cite{Xie2017} & OR (ours) \\
    \midrule
    MNIST & Targeted & 1.00 & - & \textbf{0.00}  \\
    MNIST & Untargeted & 0.99 & - & \textbf{0.01} \\
    CIFAR10 & Targeted & 1.00 & - & \textbf{0.011} \\
    CIFAR10 & Untargeted & 1.00 & - & \textbf{0.19} \\
    ImageNet & Untargeted & - & 0.76 & \textbf{0.005} \\
    \bottomrule
  \end{tabular}
\end{table*}

\subsubsection{Comparison to other defenses}
We evaluated defensive distillation \cite{Papernot2016b} and input randomization \cite{Xie2017} against ZOO and found that these defenses did not reduce the attack success rate significantly, as shown in Table~\ref{tab:distill}. Input randomization \cite{Xie2017} has limited success in defending against ZOO (reducing attack success rate to 0.76), however it is not as effective as output randomization. This is because randomization applied to the input is not guaranteed to affect the finite difference gradient estimates. Additionally, input randomization also does not allow fine control over model accuracy as with our proposed method. We evaluated self-ensembling \cite{Liu2017} and found that it is more susceptible to finite difference based black box attacks and has a greater robustness-accuracy trade-off compared to output randomization. Our proposed method performs better against finite difference attacks with less tunable parameters, making it more efficient to train and deploy. Results are summarized in Figures \ref{fig:comparison} and \ref{fig:comparison2}. Our evaluation of the test set accuracy of self-ensembling is consistent with that of the original authors.

\begin{figure}
    \centering
    \includegraphics[width=.799\linewidth]{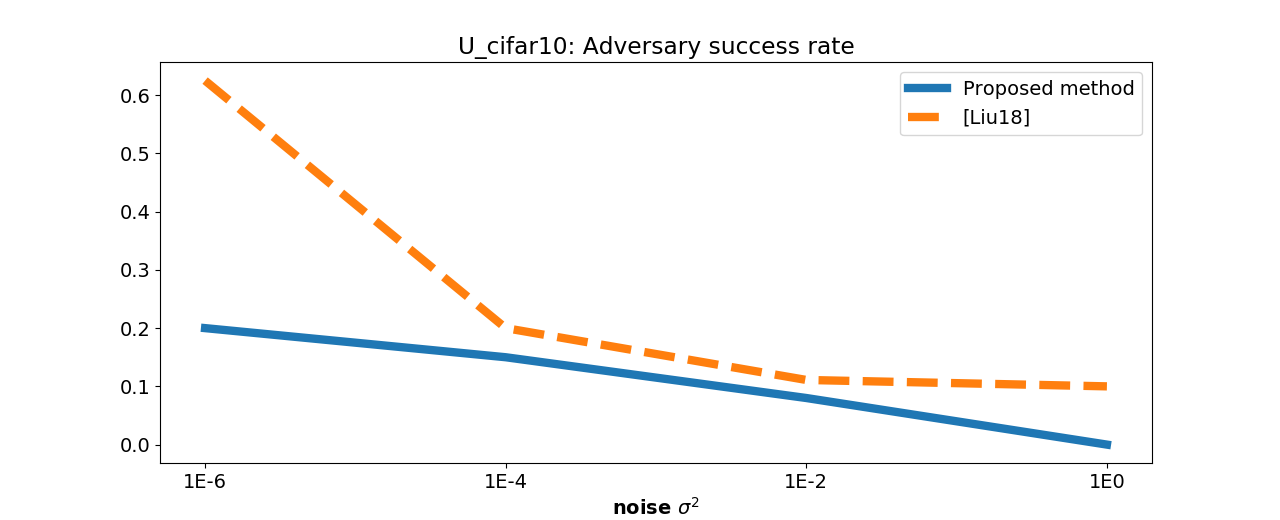}
    \caption{Untargeted ZOO attack (CIFAR10) success rate against output randomization (solid) and self-ensembling (dashed) vs variance for the non-adaptive attacker. Even at the highest $\sigma^2$, output randomization is more effective at thwarting the attack. }
    \label{fig:comparison}
\end{figure}

\begin{figure}
    \centering
    \includegraphics[width=.799\linewidth]{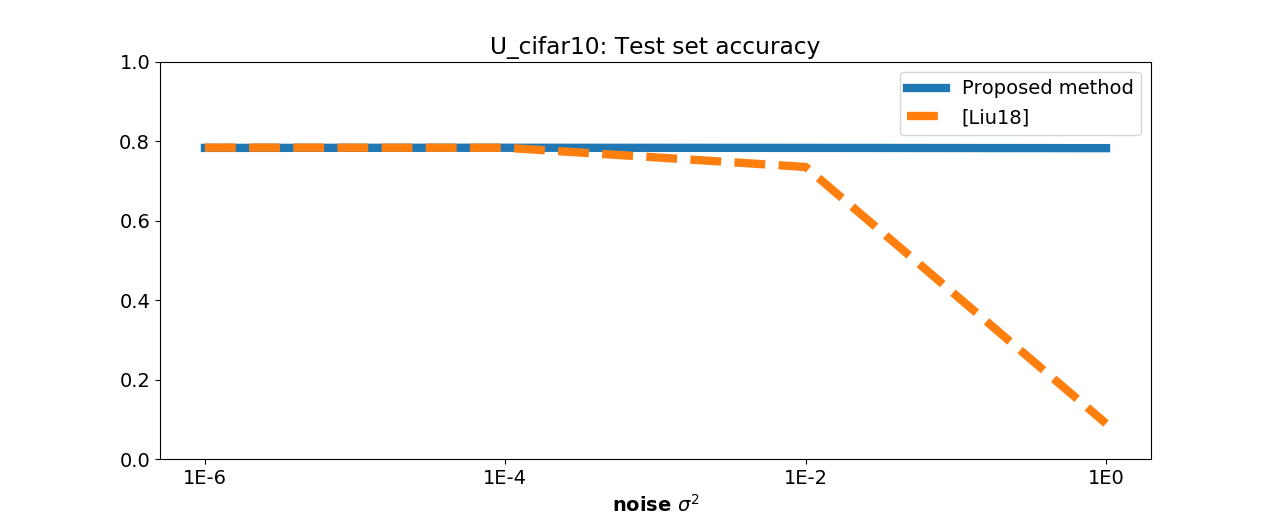}
    \caption{Test set classification accuracy by a DNN (CIFAR10) with output randomization vs self-ensembling with respect to $\sigma^2$. Increasing $\sigma^2$ causes self-ensembling to trade accuracy for robustness at a higher rate than output randomization.}
    \label{fig:comparison2}
\end{figure}

\subsubsection{Adaptive attack results}
    \begin{figure}
        \centering
        \includegraphics[width=\linewidth]{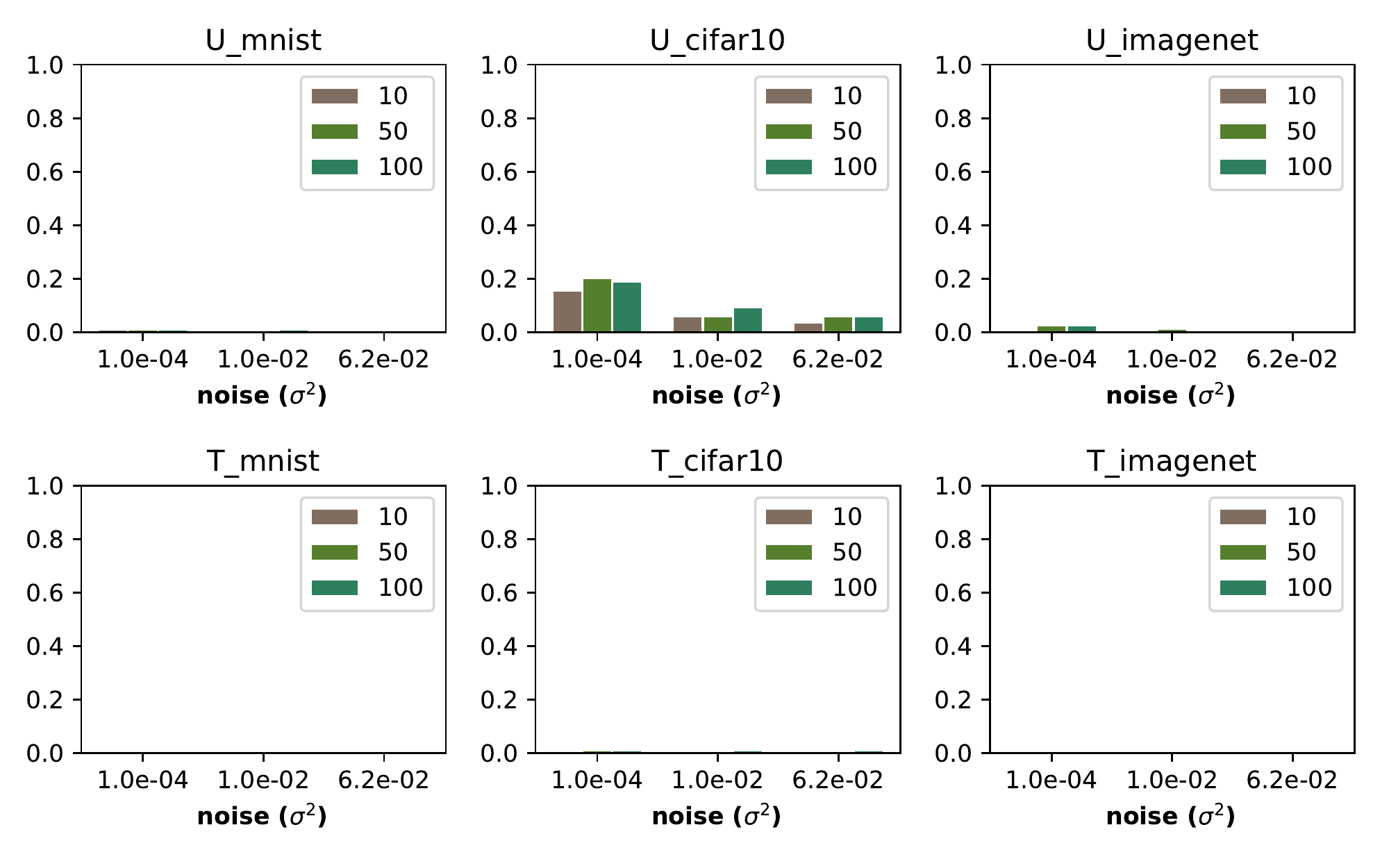}
        \caption{ZOO-Attack success rate vs variance (bar groups) for adaptive attacker with increasing averaging (10, 50, 100 samples). Top row shows untargeted and bottom row shows targeted attacks. Averaging does not improve attack success rate of the black box attacker.}
        \label{fig:black_asr_samps}
    \end{figure}
    
Figure \ref{fig:black_asr_samps} was omitted from the main manuscript due to space. However, we describe the shown results as well as discuss how the attack can be altered to be made successful. We believe that increasing the allowed number of samples given to the adversary (to average over) would increase the rate of success. However, we were unable to test this given our own computational limits.

\subsection{Empirical results: output randomization training}
In this section, we present supplemental empirical results for the effects of output randomization training on the robustness of deep neural networks.

    \subsection{Varying $\sigma$ at training time}
    We experimented with different values for standard deviation $\sigma$ for the Gaussian from which we sample noise to be added to the pre-softmax layer of the target models. The results of this experimentation are shown in Figure \ref{fig:varysigma}
    
    \begin{figure}
        \centering
        \includegraphics[width=\linewidth]{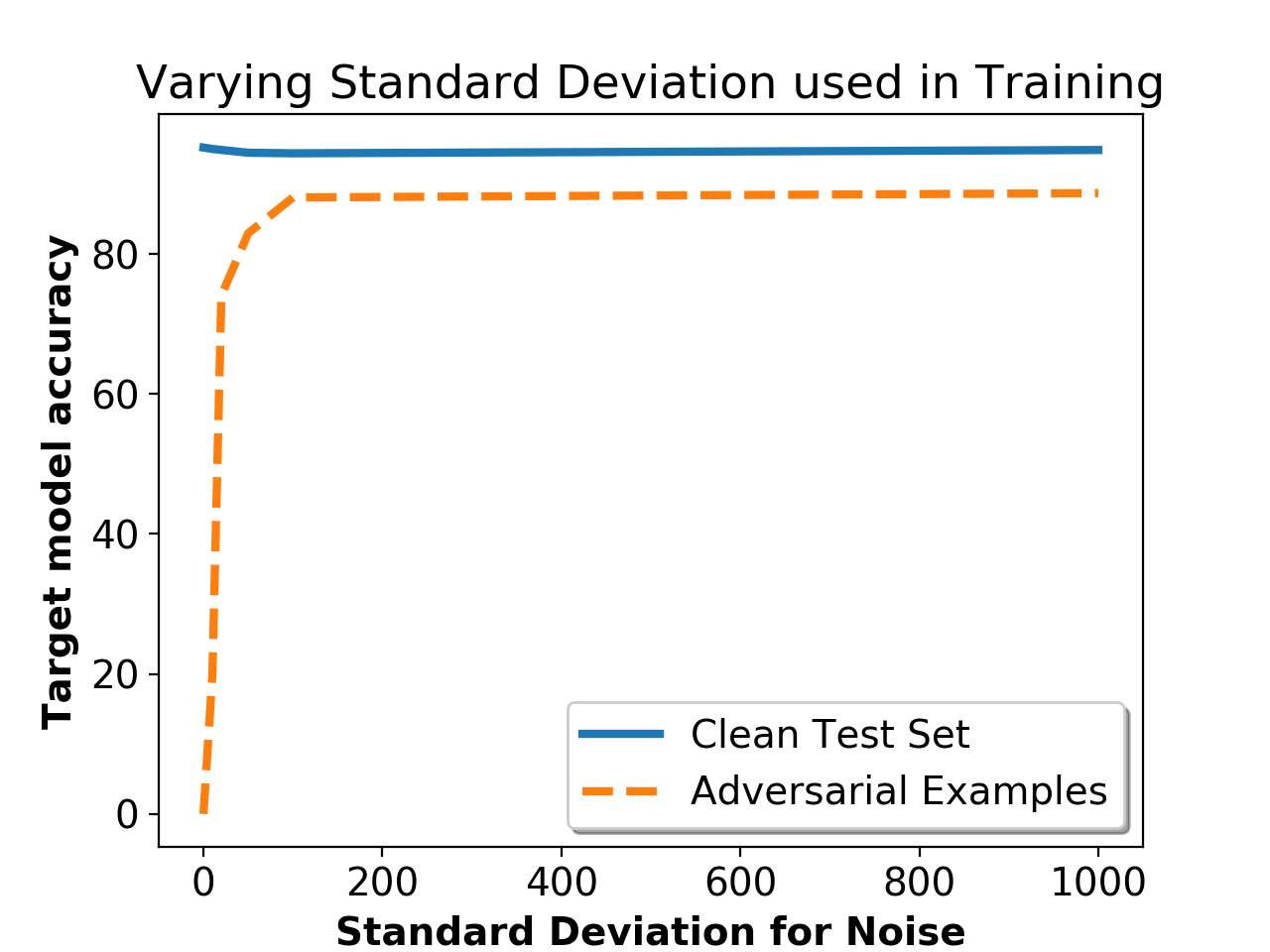}
        \caption{Adversarial robustness and clean test set accuracy of Wide Resnet 32-10 with variable $\sigma$ for Gaussian noise added to pre-softmax layer.}
        \label{fig:varysigma}
    \end{figure}
    
    We were very surprised to see that there is very little trade-off between adversarial robustness and clean test set accuracy as the value of $\sigma$ increases. Even in our experimentation with adding noise to the pre-softmax layer in our black box defense, we saw a substantial trade-off between test set accuracy and robustness with $\sigma \ge 1e3$. It is expected that larger amounts of noise must be added to the pre-softmax output because the softmax function will scale the probabilities, however it was unexpected how much noise could be added using Wide Resnet 32-10 before seeing instability at training time. We hypothesize that this is due to Wide Resnet 32-10 being a higher capacity network \cite{Madry2017}, especially when compared to the 3-convolutional layer models used to evaluate the ZOO attack. 
    
    However, we do see extremely large $\sigma$ values leading to low accuracies for our target model. This is due to loss of precision with the softmax function. With large $\sigma$ values, the softmax function may return a binary indicator vector for the predicted label due to computational instability. Consider a simple example with logits $l = $ [0, 1e10]. As part of the softmax function, we compute $e^{l - \mbox{max}(l)}$. Due to loss of precision, we get [0.0, 1.0], which is a binary vector indicating the predicted label. This leads to instability at test time as the model is heavily penalized for any incorrect prediction. 
    
    Furthermore, we output the unscaled logits at training time and saw that the unscaled logit representing the correct class linearly scaled with the value of $\sigma$ as the model began to converge. This further supports the claims made in Section 6 that each step of stochastic gradient descent, attempts to update the parameters of the model such that the logit of the correct class is larger than the logits of the remaining classes. For example, set $\sigma=100$. As the model begins to converge, the logit of the predicted class become an order of magnitude 2 (x100) greater than the logits of the remaining classes. This trend stayed true throughout our experiments for varying $\sigma$ at training time.

    \subsubsection{Output randomization training with models from black box experiments}
    We also repeated the white box experiments using the MNIST and CIFAR10 models from our experiments evaluating output randomization as a defense against black box finite difference attacks. The results are shown in Figures \ref{fig:whitebox_mnist} and \ref{fig:whitebox_cifar}.
    \begin{figure}[h]
        \centering
        \includegraphics[width=\linewidth]{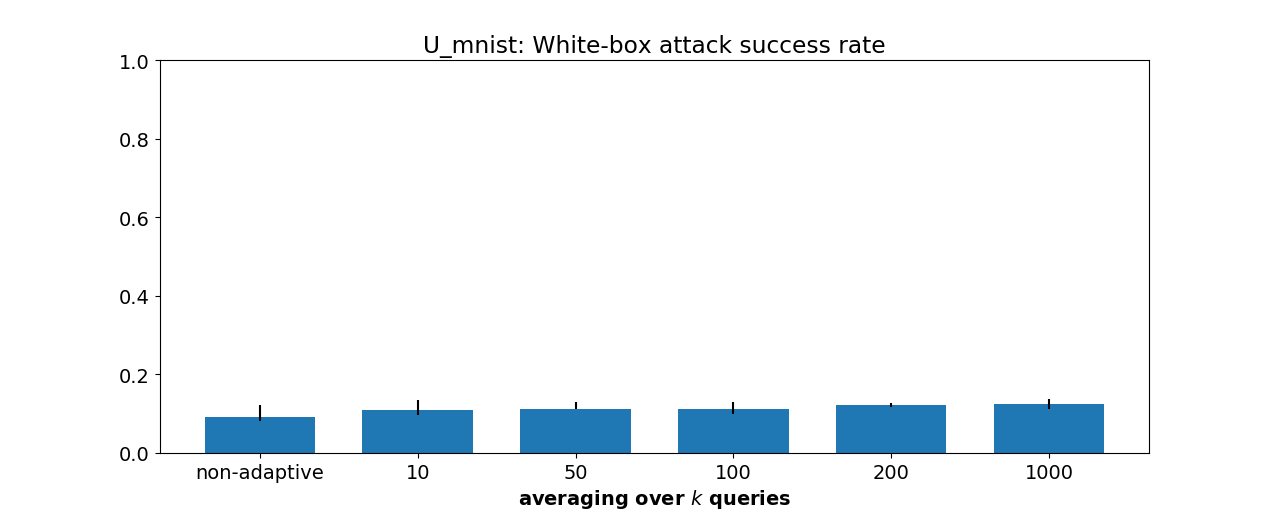}
        \caption{MNIST}
        \label{fig:whitebox_mnist}
        \end{figure}
            
    \begin{figure}[h]
        \centering
        \includegraphics[width=\linewidth]{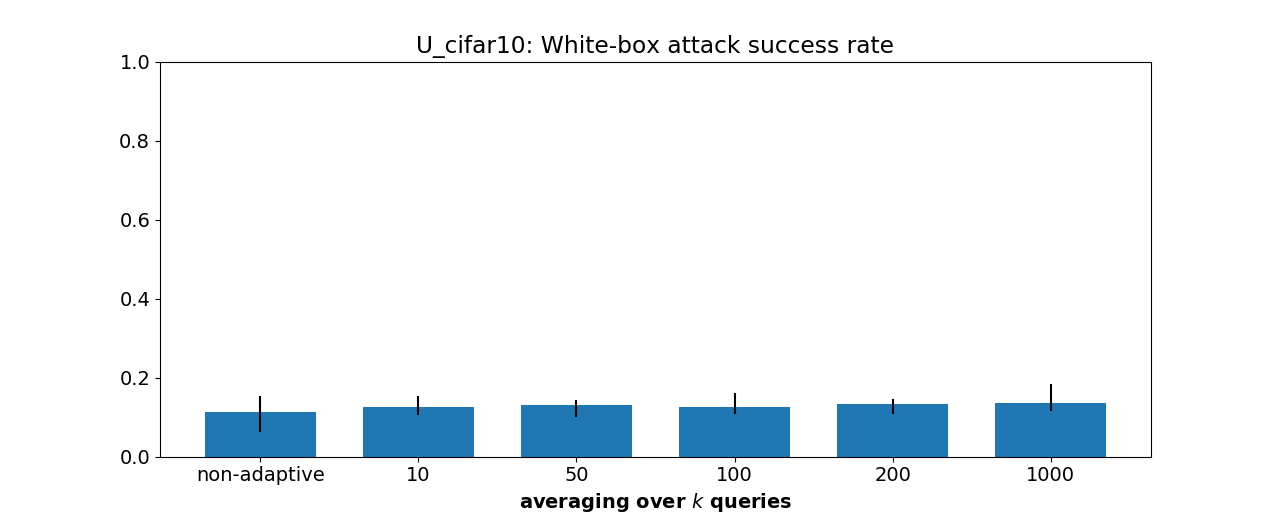}
        \caption{CIFAR10}
        \label{fig:whitebox_cifar}
        \caption{Untargted white box adaptive and non-adaptive attacks on DNNs trained on MNIST and CIFAR10 with output randomization with a noise $\sigma^2$ of 0.01. The effectiveness is similar to that seen in Figure \ref{fig:whitebox_mnist}. Training with output randomization increases resistance against adaptive white box attacks. }
    \end{figure}

    We selected $\sigma^2 = 0.01$ for the variance of the Gaussian distribution from which we sample our noise because it maintains the classifiers' test set accuracy while also being susceptible to a white box attacker, as we showed in the main body of the paper. 
    Incorporating output randomization with training significantly lowered the white box attack's success rate when attacking the DNNs compared to using output randomization alone. 
    
\begin{table*}[h]
    \caption{Noise insertion after every convolutional layer, the first convolutional layer, and the second convolutional layer in Wide Resnet 32-10 residual blocks}
    \label{tab:all_conv}
    \centering
    \begin{tabular}{llccc}
        Location & Noise $\sigma$ & Clean Accuracy & PGD (Xent) & PGD (CW) \\
        \midrule
        Every convolutional layer & 5  & 86.92\% & 0.0\% & 0.0\% \\
        & 10 & 66.36\% & 0.09\% & 0.14\% \\
        & 15 & 59.27\% & 0.13\% & 0.20\% \\
        \midrule
        First convolutional layer & 10   & 89.5\%  & 0.0\% & 0.0\% \\
        & 20   & 83.27\% & 0.0\% & 0.0\% \\
        & 50   & 73.4\%  & 0.0\% & 0.0\% \\
        & 100  & 69.42\% & 0.0\% & 0.02\% \\
        & 500  & 55.32\% & 0.16\% & 0.14\% \\
        & 1000 & 50.44\% & 0.17\% & 0.20\% \\
        \midrule
        Second convolutional layer & 10   & 91.08\%  & 0.0\% & 0.0\% \\
        & 20   & 87.14\% & 0.0\% & 0.0\% \\
        & 50   & 75.59\%  & 0.02\% & 0.02\% \\
        & 100  & 66.65\% & 0.05\% & 0.03\% \\
        & 500  & 35.96\% & 0.16\% & 0.15\% \\
        & 1000 & 9.71\% & 9.61\% & 10.23\% \\      
        \bottomrule
    \end{tabular}
\end{table*}

\subsubsection{Defending against Transfer Attacks}
We found the results of our experimentation with output randomization training against black box transfer attacks to be interesting.

\begin{figure}
    \centering
    \includegraphics[width=0.85\linewidth]{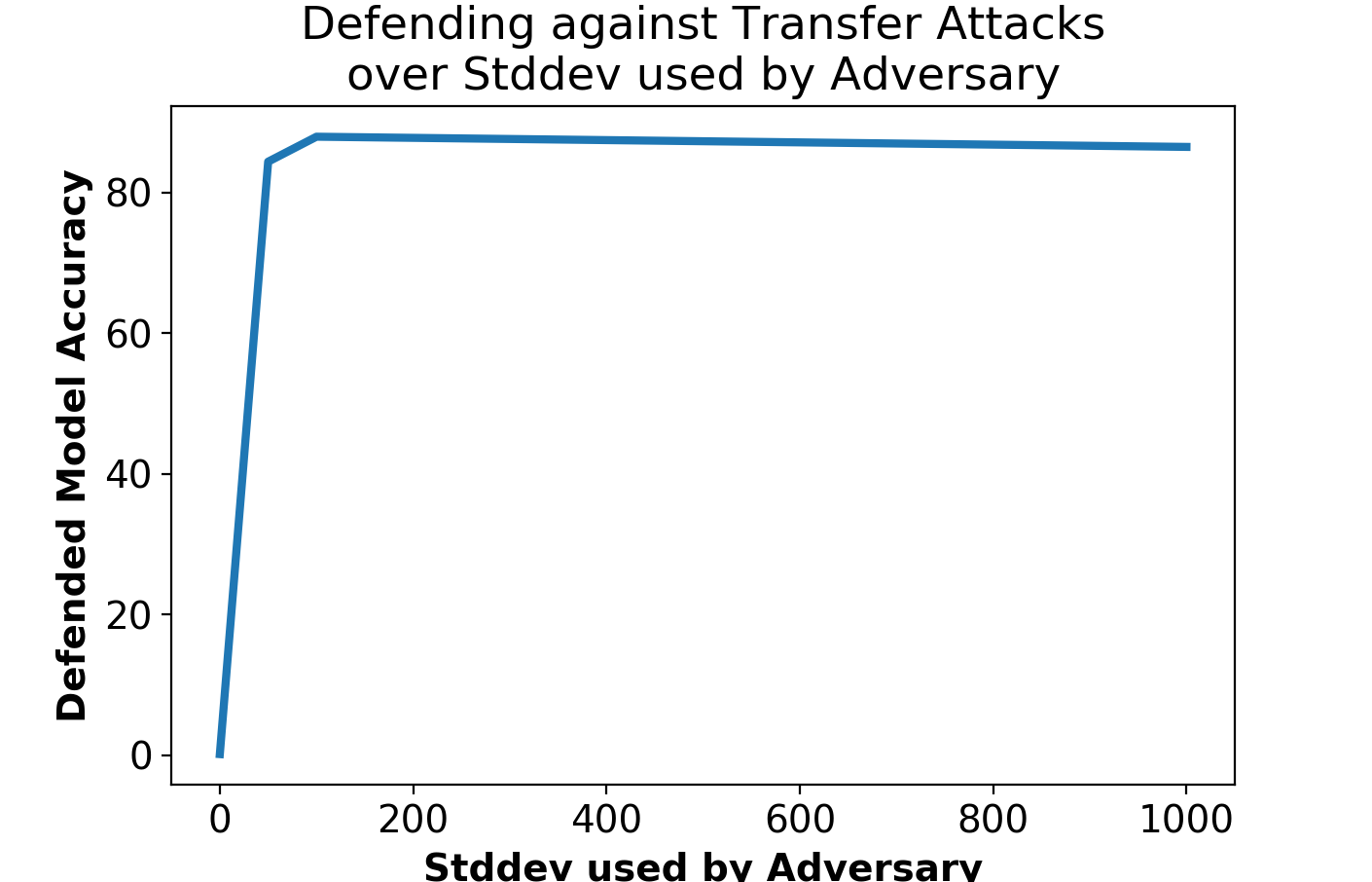}
    \caption{Wide Resnet 32-10 robustness against varying transfer attacks when trained with output randomization $\sigma=100$. The adversary has knowledge of the training data and target model's architecture.}
    \label{fig:transfer}
\end{figure}

In Figure \ref{fig:transfer}, we show the robustness of a CIFAR-10 model defended with output randomization training with $\sigma=100$ against multiple transfer attacks. We assumed a very powerful adversary that has the exact architecture and training set used by the defended model. The transfer attack (1) trains a substitute model using the training set (2) attacks the substitute model and (3) transfers successful adversarial examples to the target model. We repeated the experiment varying $\sigma$ used by the adversary.

Interestingly, an adversary is very successful when training a substitute model with no noise. This further supports our hypothesis in the main paper that an adversary becomes "stuck" in a local maximum when attacking the target model. We can see that the transfer attack's success rate drastically falls when the adversary attempts to mimic output randomization training. However, this means that output randomization can be easily thwarted by transfer attacks. Thus, we suggest combining output randomization and adversarial training to defend against transfer attacks, similar to defending against PGD(CW) attacks.

\subsubsection{Experimenting with noise location}
Other defenses such as \cite{Madry2017}, \cite{Xie2017}, and \cite{Liu2017} use randomization and noise in different locations in a model's architecture. Although our proposed defense randomizes the model's output by injecting noise to the final layer, we experimented with different locations for our noise injection. Specifically, we injected noise after every convolutional layer, after the first convolutional layer, and after the second convolutional layer in each residual block in Wide Resnet 32-10. The results are shown in Table \ref{tab:all_conv}.

As stated in \cite{Liu2017}, we found that noise needed to be injected at test time in addition to the training phase to maintain high classification accuracy with $\sigma=10$. However, we found in all three of our experiments that the clean test set accuracy dropped rapidly as $\sigma$ increased. In our experimentation with different $\sigma$ values, we see that output randomization training outperforms randomization in other locations in terms of test set accuracy and adversarial robustness.

\end{document}